\newtheorem{theorem}{Theorem}[section]
\newtheorem{lemma}[theorem]{Lemma}
\newtheorem{definition}[theorem]{Definition}
\newtheorem{assumption}[theorem]{Assumption}
\title{Conflict-Averse Gradient Aggregation for \\ Constrained Multi-Objective Reinforcement Learning}
\author{%
Dohyeong Kim$^1$, Mineui Hong$^1$, Jeongho Park$^1$, and Songhwai Oh$^1$\thanks{Corresponding author: songhwai@snu.ac.kr.} \\
$^{1}$Dep. of Electrical and Computer Engineering, Seoul National University \\
}
\begin{document}

\maketitle

\begin{abstract}
In many real-world applications, a reinforcement learning (RL) agent should consider multiple objectives and adhere to safety guidelines.
To address these considerations, we propose a constrained multi-objective RL algorithm named \textit{\textbf{Co}nstrained \textbf{M}ulti-\textbf{O}bjective \textbf{G}radient \textbf{A}ggregator (CoMOGA)}. 
In the field of multi-objective optimization, managing conflicts between the gradients of the multiple objectives is crucial to prevent policies from converging to local optima.
It is also essential to efficiently handle safety constraints for stable training and constraint satisfaction.
We address these challenges straightforwardly by treating the maximization of multiple objectives as a constrained optimization problem (COP), where the constraints are defined to improve the original objectives.
Existing safety constraints are then integrated into the COP, and the policy is updated using a linear approximation, which ensures the avoidance of gradient conflicts.
Despite its simplicity, CoMOGA guarantees optimal convergence in tabular settings.
Through various experiments, we have confirmed that preventing gradient conflicts is critical, and the proposed method achieves constraint satisfaction across all tasks.
\end{abstract}

\section{Introduction}





Many real-world reinforcement learning (RL) applications involve multiple objectives and have to consider safety simultaneously.
For instance, locomotion tasks for legged robots focus on maximizing goal-reaching success rate and energy efficiency while avoiding collisions with the environment, as illustrated in Figure \ref{fig: overview}.
To effectively tackle these considerations, constrained multi-objective RL (CMORL) has been introduced \citep{huang22lp3}.
CMORL aims to find a set of \textit{constrained-Pareto (CP)} optimal policies \citep{hayes2022practical}, which are not dominated by any others while satisfying safety constraints, rather than finding a single optimal policy.
This allows users to choose and utilize their preferred policy from the set without additional training.
In order to express a set of CP optimal policies, it is common to use a concept called \textit{preference} \citep{yang2019envelop, kyriakis2022pareto}, which encodes the relative importance of the objectives. 
Then, a set of policies can be represented by a preference-conditioned policy, a mapping function from a preference space to a policy space.

In most multi-objective RL (MORL) algorithms \citep{jean2023mgda, yang2019envelop, basaklar2023pdmorl}, the preference-conditioned policy is trained by maximizing a scalarized reward, computed as the dot product of a given preference and rewards of the multiple objectives.
These approaches, called linear scalarization (LS), offer the benefits of straightforward implementation and ensure Pareto-optimal convergence in tabular settings \citep{lu2023multiobjective}.
However, when applied to deep RL with function approximators, LS tends to converge to local optima due to the nonlinearity of the objective functions, as shown in a toy example in Figure \ref{fig: toy example}.
In the field of multi-task learning (MTL) \citep{yu2020pcgrad, liu2021cagrad, navon2022nashmtl}, this issue has been addressed by avoiding conflicts between the gradients of multiple objective functions. 
We hypothesize that similar improvements can be achieved in CMORL by avoiding gradient conflicts.

It is also crucial to handle constraints as well as multiple objectives in CMORL.
A straightforward approach is treating the constraints as other objectives and concurrently adjusting the preferences corresponding to those objectives to maintain them below thresholds, as done in \citep{huang22lp3}.
This approach exactly aligns with the Lagrangian method, which solves the Lagrange dual problem by updating policy and multipliers alternatively.
However, the Lagrangian method can make training unstable due to the concurrent update of policy and multipliers \citep{stooke2020pid, kim2022offtrc}.
As the updates of the policy and multipliers influence each other, any misstep in either can make the training process diverge.
Thus, it is critical to manage constraints without introducing additional optimization variables.


In order to avoid gradient conflicts and efficiently handle safety constraints, we introduce a novel but straightforward algorithm named \textit{\textbf{Co}nstrained \textbf{M}ulti-\textbf{O}bjective \textbf{G}radient \textbf{A}ggregator (CoMOGA)}. 
CoMOGA treats a multi-objective maximization problem as a constrained optimization problem (COP), where constraints are designed to enhance the original objective functions in proportion to their respective preference values. 
Existing safety constraints are then incorporated into the constraint set of the COP.
By explicitly preventing the given objectives from decreasing, CoMOGA successfully avoids gradient conflicts.
The policy gradient is then calculated by solving this transformed problem using linear approximation within a local region, thereby eliminating the need for extra variables to handle safety constraints. 
While the proposed method is intuitive and straightforward, we have also demonstrated that it converges to a CP optimal policy in a tabular setting.

The proposed method has been evaluated across diverse environments, including multi-objective tasks with and without constraints.
The experimental results support our hypothesis that avoiding gradient conflicts is effective for preventing convergence to local optima. 
This is particularly essential in environments with unstable dynamics, such as bipedal and humanoid locomotion tasks, since they are prone to get stuck in suboptimal behaviors, such as frequent falling.
Furthermore, the absence of additional variables enables stable training with constraint satisfaction across all tasks.
\begin{figure}[t]
    \centering
    \begin{minipage}[c]{0.39\textwidth}
        \centering
        \vspace{10pt}
        \includegraphics[width=\textwidth]{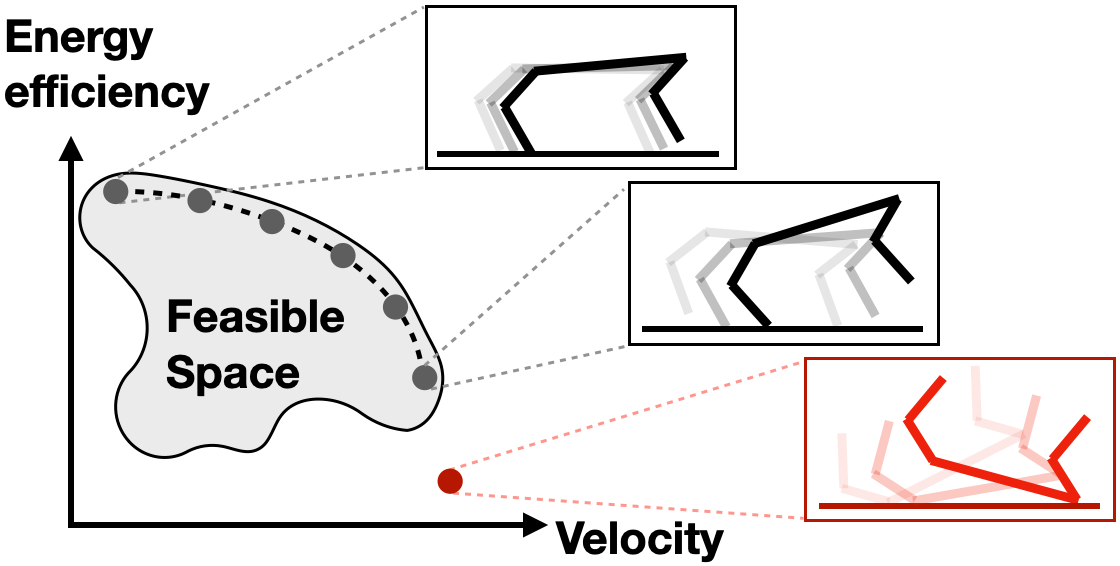}
        \caption{\small \textbf{Example of CMORL.} The robot aims to maximize energy efficiency and velocity while maintaining its balance to avoid falling.
        In order to consider such safety and multiple objectives, CMORL finds a set of feasible policies that are not dominated by other policies and satisfy constraints, which are indicated by the dashed line.}
        \label{fig: overview}
    \end{minipage}
    \hfill
    \begin{minipage}[c]{0.58\textwidth}
        \centering
        \includegraphics[width=0.8\textwidth]{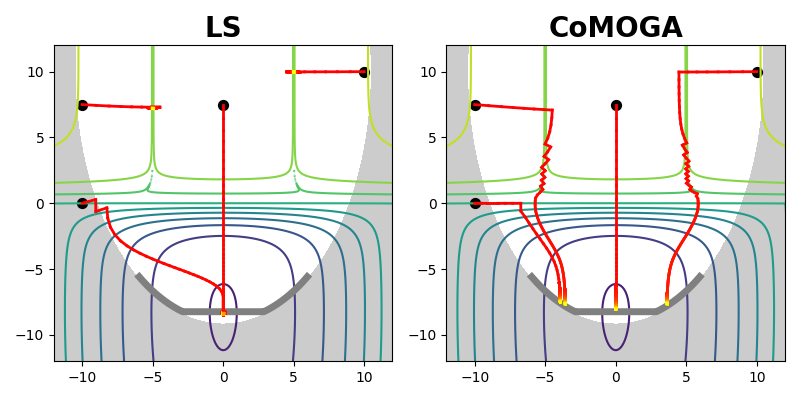}
        \vspace{-5pt}
        \caption{\small \textbf{Comparison between LS and the proposed method.} 
        Optimization trajectories are indicated in red, with initial points marked by black circles. 
        The contours illustrate the average values of the two objective functions, while the shaded area indicates regions where the constraints are violated. 
        The grey line represents the CP optimal set. 
        For some initial points, LS fails to reach the optimal set, whereas CoMOGA consistently finds it from any starting position.
        For more details, please see Appendix \ref{sec: toy example}.}
        \label{fig: toy example}
    \end{minipage}
    \vspace{-10pt}
\end{figure}

\section{Related Work}

\textbf{Multi-Objective RL.}
MORL aims to find a set of Pareto-optimal policies that are not dominated by any others.
Given that Pareto-optimal policies can be obtained with appropriate reward scalarization \citep{hayes2022practical}, many MORL algorithms utilize scalarization techniques.
Based on linear scalarization, \citet{yang2019envelop} introduced a new Bellman operator for MORL algorithms, which estimates a target Q value for a given preference by combining Q values from other preferences.
\citet{basaklar2023pdmorl} presented a similar Bellman operator and extended it to an actor-critic framework to enable its application in continuous action spaces.
For the continuous action spaces, \citet{lu2023multiobjective} proposed a SAC-based MORL algorithm, theoretically demonstrating that elements of the Pareto-optimal policy set can be achieved through linear scalarization.
Not only the linear scalarization, \citet{moffaert2013scalarization} proposed a nonlinear scalarization method, called \textit{Chebyshev} scalarization, which can cover a non-convex policy set.
To further improve performance, the evolution strategy (ES) can be applied as done in multi-objective optimization criteria \citep{deb2000nsga2}.
\citet{chen2020combining} and \citet{xu2020pgmorl} proposed methods that repetitively update a policy for each population using linear scalarization and produce a new generation of policy parameters.
There is also a different approach from scalarization. 
\citet{abdolmaleki2020mompo} have proposed a heuristic method of applying preference in the action distribution space instead of the reward space.
It is further extended to a CMORL algorithm in \citep{huang22lp3}.

\textbf{Constrained RL.}
Constrained RL is developed to explicitly consider safety or constraints in RL.
Based on the approach to handling constraints, constrained RL algorithms can be categorized into two types: primal-dual methods and primal methods.
The primal-dual methods, also called Lagrangian methods, are designed to address Lagrange dual problems. 
\citet{ding2022convergence} introduced a natural policy gradient-based primal-dual method and demonstrated its convergence to an optimal policy at a specified convergence rate. 
Another primal-dual method, proposed by \citet{bai2022achieving}, ensures that a trained policy results in zero constraint violations during evaluation. 
While these methods are straightforward to implement and can be integrated with standard RL algorithms, the training process can be unstable due to additional optimization variables (Lagrange multipliers) \citep{stooke2020pid}.
In contrast to the primal-dual method, the primal method addresses the constrained RL problem directly, so no additional optimization variables are required.
\citet{achiam2017cpo} introduced a trust region-based primal method. This approach linearly approximates a constraint within a trust region and updates the policy through a line search.
\citet{xu2021crpo} presented a natural policy gradient-based algorithm and demonstrated its convergence rate towards an optimal policy.
In multi-constrained settings, \citet{kim2023sdac} proposed a primal method to handle infeasible starting cases. 

\section{Background}

\textbf{Constrained Multi-Objective MDP.} 
We define a constrained multi-objective Markov decision process (CMOMDP) as a tuple represented by $\langle S$, $A$, $\mathcal{P}$, $R$, $C$, $\rho$, $\gamma\rangle$, where $S$ is a state space, $A$ is an action space, $\mathcal{P}: S \times A \times S \mapsto [0, 1]$ is a transition model, $R: S \times A \times S \mapsto \mathbb{R}^N$ is a vectorized reward function, $C: S \times A \times S \mapsto \mathbb{R}^M$ is a vectorized cost function, $\rho$ is an initial state distribution, and $\gamma$ is a discount factor.
The reward and cost function are bounded by $R_\mathrm{max}$.
In the CMOMDP setting, a policy $\pi(\cdot|s) \in \Pi$ can be defined as a state-conditional action distribution.
The objective function of the $i$th reward is defined as:
\begin{equation*}
J_{R_i}(\pi) := \mathbb{E}\left[\sum\nolimits_{t=0}^\infty \gamma^t R_i(s_t, a_t, s_{t+1})\right],
\end{equation*}
where $s_0 \sim \rho$, $a_t \sim \pi(\cdot|s_t)$, and $s_{t+1} \sim \mathcal{P}(\cdot|s_t, a_t)$ $\forall t$.
We also define the constraint function $J_{C_k}(\pi)$ by replacing the reward with the cost function.
A constrained multi-objective RL (CMORL) problem is defined as:
\begin{equation}
\label{eq: cmorl problem}
\mathrm{maximize}_\pi \; J_{R_i}(\pi) \; \forall i \in \{1, ..., N\} \quad \mathbf{s.t.} \; J_{C_k}(\pi) \leq d_k \; \forall k \in \{1, ..., M\},
\end{equation}
where $d_k$ is the threshold of the $k$th constraint.
Here, it is ambiguous to determine which policy is optimal when there are multiple objectives.
To address this, we instead define a set of optimal policies using the following notion, called \emph{constrained dominance} \citep{miettinen1999nonlinear}:
\begin{definition}[Constrained Dominance]
Given two policies $\pi_1, \pi_2 \in \{\pi\in \Pi | J_{C_k}(\pi) \leq d_k \; \forall k\}$, $\pi_1$ is dominated by $\pi_2$ if $J_{R_i}(\pi_1) \leq J_{R_i}(\pi_2) \; \forall i \in \{1, ..., N\}$. 
\end{definition}
A policy $\pi$ which is not dominated by any policy is called a \emph{constrained-Pareto (CP) optimal} policy, and the set of all CP optimal policies is called \emph{constrained-Pareto (CP) front} \citep{miettinen1999nonlinear}.
Additionally, a condition for gradient conflicts is defined as follows:
\begin{definition}[Gradient Conflict] 
Let $g_i$ be the gradient of the $i$th objective function, and let $g$ be the gradient used for policy updates. 
We say there is a gradient conflict if $\exists i$ such that $g_i^T g<0$.
\end{definition}
As studied in MTL \citep{liu2021cagrad, yu2020pcgrad} and shown in Figure \ref{fig: toy example}, gradient conflicts can cause convergence to local optima.
Thus, we aim to find the CP front while updating policies without gradient conflicts.

\textbf{Preference.}
In order to express a subset of the CP front, many existing MORL works \citep{xu2020pgmorl, alegre2023sample} introduce a preference space, defined as: $\Omega := \{\omega \in \mathbb{R}^N | \; \omega_i \geq 0, \; ||\omega||_\infty = 1 \}$,\footnote{It is common to define the preference space using the norm $||\omega||_1$, but for convenience, we employ $||\omega||_\infty$. Additionally, an one-to-one conversion exists between these definitions.} where a preference $\omega \in \Omega$ serves to specify the relative significance of each objective.
Subsequently, a specific subset of the Pareto front can be characterized by a function mapping from the preference space to the policy space.
We denote this function as a \textit{universal policy} $\pi(\cdot|s, \omega)$, which is characterized as an action distribution contingent upon both the state and preference.
Ultimately, our goal is to train the universal policy that covers the CP front as extensively as possible.


\section{Proposed Method}

In this section, our discussion is organized into four parts: \textbf{1)} building a constrained optimization problem to handle the multiple objectives with safety constraints, \textbf{2)} calculating a policy gradient by aggregating gradients of the objective and constraint functions for a single preference, \textbf{3)} introducing a method for training a universal policy covering a spectrum of preferences, and \textbf{4)} concluding with an analysis of the convergence properties of the proposed method.
Before that, we first parameterize the policy and define gradients of the objective and constraint functions, as well as a local region in the parameter space.
By representing the policy as $\pi_\theta$ with a parameter $\theta \in \Theta$, the gradients of the objective and constraint functions in (\ref{eq: cmorl problem}) are expressed as:
\begin{equation*}
g_i := \nabla_\theta J_{R_i}(\pi_\theta), \; b_k := \nabla_\theta J_{C_k}(\pi_\theta).
\end{equation*}
For brevity, we will denote the objectives as $J_{R_i}(\theta)$ and the constraints as $J_{C_k}(\theta)$.
The local region is then defined using a positive definite matrix $H$ as: $||\Delta \theta||_H \leq \epsilon$, where $||x||_H := \sqrt{x^T H x}$, and $\epsilon$ is the local region size, a hyperparameter.
$H$ can be the identity matrix or the Fisher information matrix, as in TRPO paper \citep{schulman2015trpo}.

\subsection{Transformation to Constrained Optimization}

We introduce a process of treating the CMORL problem as a constrained optimization problem (COP). 
To provide insight, we first present the process in a simplified problem where only the $i$th objective exists and then extend it to the original problem.
The problem of maximizing the $i$th objective within the local region is expressed as follows:
\begin{equation}
\label{eq: simplified problem}
\begin{aligned}
\mathrm{max}_{\Delta \theta} \; J_{R_i}(\theta_\mathrm{old} + \Delta \theta) \quad \mathbf{s.t.} \; ||\Delta \theta||_H \leq \epsilon.
\end{aligned}
\end{equation}
Assuming that $J_{R_i}(\theta)$ is linear with respect to $\theta$ within the local region, the above problem has the same solution as the following problem:
\begin{equation}
\label{eq: simplified converted problem}
\begin{aligned}
\mathrm{min}_{\Delta \theta} \; \Delta \theta^T H \Delta \theta \quad \mathbf{s.t.} \; e_i \leq J_{R_i}(\theta_\mathrm{old} + \Delta \theta) - J_{R_i}(\theta_\mathrm{old}),
\end{aligned}
\end{equation}
where $e_i := \epsilon ||g_i||_{H^{-1}}$.
It is shown that the solutions to (\ref{eq: simplified problem}) and (\ref{eq: simplified converted problem}) are identical in Appendix \ref{sec: proof of transformation}, and this finding confirms that objectives can be converted into constraints in this way.
Consequently, the CMORL problem can be transformed as follows:
\begin{equation}
\label{eq: transformed CMORL problem}
\mathrm{min}_{\Delta \theta} \Delta \theta^T H \Delta \theta \quad \mathbf{s.t.} \; \omega_i e_i \leq J_{R_i}(\theta_\mathrm{old} + \Delta \theta) - J_{R_i}(\theta_\mathrm{old}) \; \forall i, \; J_{C_k}(\theta_\mathrm{old} + \Delta \theta) \leq d_k \; \forall k,
\end{equation}
where $e_i$ in (\ref{eq: simplified converted problem}) is scaled to $\omega_i e_i$ to reflect the given preference $\omega$.
We will show that the policy updated using (\ref{eq: transformed CMORL problem}) converges to a CP optimal policy of (\ref{eq: cmorl problem}) in Section \ref{sec: convergence analysis} under mild assumptions.

\subsection{Gradient Aggregation}

Now, we calculate a policy gradient by solving (\ref{eq: transformed CMORL problem}) through a linear approximation and update a policy within the local region to reduce approximation errors.
First, the problem (\ref{eq: transformed CMORL problem}) can be linearly approximated as the following quadratic programming (QP) problem:
\begin{equation}
\label{eq: gradient aggregation}
\bar{g}_\omega^\mathrm{ag} := \mathrm{argmin}_{\Delta\theta} \Delta\theta^T H \Delta\theta \quad \mathbf{s.t.} \; \omega_i e_i \leq g_i^T\Delta\theta \; \forall i, \; b_k^T\Delta\theta + J_{C_k}(\theta_\mathrm{old}) \leq d_k \; \forall k,
\end{equation}
where the gradients of the objective and constraint functions, $g_i$ and $b_k$, are aggregated into $\bar{g}_\omega^\mathrm{ag}$.
The gradient is then clipped to ensure that the policy is updated within the local region, as follows:
\begin{equation*}
g_\omega^\mathrm{ag} := \min(1, \epsilon /||\bar{g}_\omega^\mathrm{ag}||_H) \bar{g}_\omega^\mathrm{ag},
\end{equation*}
where the policy will be updated by $\theta_\mathrm{old} + g_\omega^\mathrm{ag}$.
However, due to the linear approximation, the updated policy can violate the safety constraints.
To address this issue, we take a recovery step by solving the following QP problem, which only consists of the safety constraints, when the current policy violates any safety constraints:
\begin{equation}
\label{eq: violated gradient aggregation}
\bar{g}_\omega^\mathrm{ag} := \mathrm{argmin}_{\Delta\theta} \Delta\theta^T H \Delta\theta \quad \mathbf{s.t.} \; b_k^T\Delta\theta + J_{C_k}(\theta_\mathrm{old}) \leq d_k \; \forall k. 
\end{equation}
As a result, the process for obtaining the policy gradient can be summarized as follows, which is named \emph{constrained multi-objective gradient aggregator (CoMOGA)}:
\begin{equation*}
\begin{aligned}
\bar{g}_\omega^\mathrm{ag} &= \begin{cases}
\mathrm{argmin}_{\Delta\theta} \Delta\theta^T H \Delta\theta \; \mathbf{s.t.} \; \omega_i e_i \leq g_i^T\Delta\theta, \; b_k^T\Delta\theta + J_{C_k}(\theta_\mathrm{old}) \leq d_k & \text{if} \; J_{C_k}(\theta_\mathrm{old}) \leq d_k \; \forall k, \\
\mathrm{argmin}_{\Delta\theta} \Delta\theta^T H \Delta\theta \; \mathbf{s.t.} \; b_k^T\Delta\theta + J_{C_k}(\theta_\mathrm{old}) \leq d_k & \text{otherwise,} \\
\end{cases} \\
g_\omega^\mathrm{ag} &= \min(1, \epsilon /||\bar{g}_\omega^\mathrm{ag}||_H) \bar{g}_\omega^\mathrm{ag},
\end{aligned}
\end{equation*}
and it is illustrated in Figure \ref{fig: CoMOGA}.
Note that the aggregated gradient satisfies  $0 \leq g_i^T g_\omega^\mathrm{ag}$ $\forall i$ due to the transformed constraints in (\ref{eq: gradient aggregation}), which ensures that the aggregated gradient does not conflict with any objective functions.

\begin{figure}[t]
\centering
\includegraphics[width=0.9\linewidth]{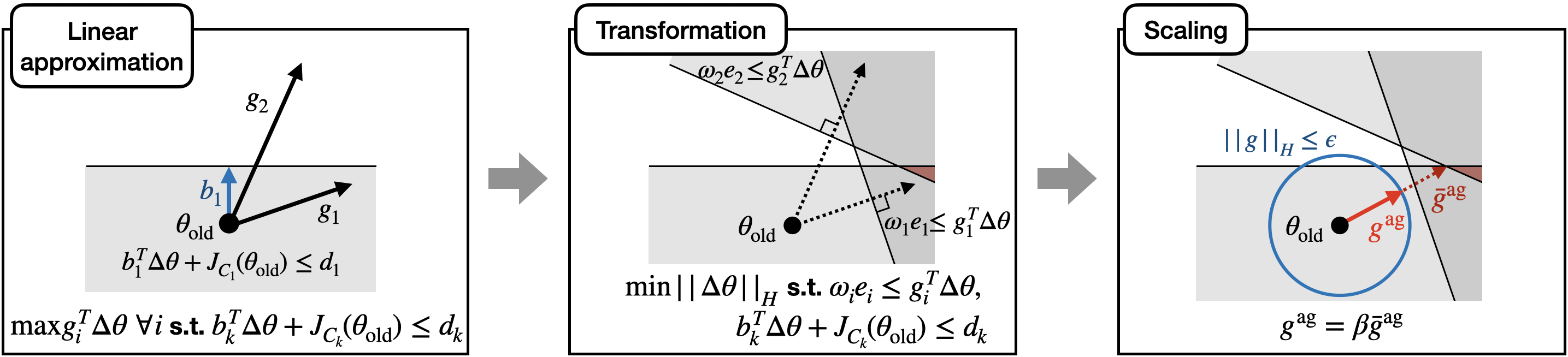}
\caption{\small
\textbf{Process of CoMOGA}. 
We visualize the process in the parameter space, and the gray areas represent constraints.
\textbf{(Linear approximation)}
CoMOGA linearly approximates the original CMORL problem in (\ref{eq: cmorl problem}). 
The gradients of the objective and constraint functions are visualized as black and blue arrows, respectively. 
\textbf{(Transformation)}
The objectives are converted to constraints as described in (\ref{eq: gradient aggregation}). 
The intersection of all constraints is shown as the red area.
\textbf{(Scaling)}
The solution of the transformed problem, $\bar{g}^\mathrm{ag}$, is scaled to ensure that the updated policy remains within the local region.}
    \label{fig: CoMOGA}
\vspace{-10pt}
\end{figure}

\subsection{Update Rule for Universal Policy}
We have introduced a gradient aggregation method named \textit{CoMOGA} for updating a policy for a given single preference.
This section presents a method for updating a universal policy that can cover the entire preference space.
Given that a policy is updated to $\theta_\mathrm{old}+g_\omega^\mathrm{ag}$ when a preference $\omega$ is specified, the ideal universal policy should satisfy the following conditions:
\begin{equation}
\bar{\pi}(a|s, \omega) = \pi_{\theta_\mathrm{old} + g_\omega^\mathrm{ag}}(a|s, \omega) \quad \forall a \in A,  \; \forall s \in S, \;\text{and}\; \forall \omega \in \Omega,
\end{equation}
where we denote $\pi_{\theta_\mathrm{old} + g_\omega^\mathrm{ag}}$ as an intermediate policy $\bar{\pi}_\omega$.
This can be interpreted as the KL divergence between the universal policy and the intermediate policy being zero.
Based on this property, we introduce a practical method to achieve the universal policy by minimizing the following loss:
\begin{equation}
\label{eq: universal policy update}
\begin{aligned}
\mathrm{min}_\theta \; \mathbb{E}_{(s, \omega) \sim \mathcal{D}} \left[ D_\mathrm{KL}(\bar{\pi}_\omega(\cdot|s, \omega)||\pi_\theta(\cdot|s, \omega)) \right],
\end{aligned}
\end{equation}
where $\mathcal{D}$ is a replay buffer.
By minimizing the above loss, we can combine policies for each preference into a single universal policy.
After the training is completed, users can utilize various policies by providing preferences to the trained universal policy without additional training.

\subsection{Convergence Analysis}
\label{sec: convergence analysis}

In this section, we analyze the convergence properties of the proposed method. 
Specifically, it can be shown that CoMOGA achieves CP optimal convergence with minor modifications if $H$ is the Fisher information matrix. 
To this end, we first introduce the necessary conditions for the policy gradient method to ensure CP optimal convergence.
We then show that the modified CoMOGA satisfies these necessary conditions.
Before that, we first introduce the following assumption:
\begin{assumption}[Slater's condition]
There exists a feasible policy $\pi_f$ such that $J_{C_k}(\pi_f) \leq d_k - \eta$ $\forall k$, where $\eta > 0$ is a positive number.
\end{assumption}
The Slater's condition is a common assumption in safe RL to ensure optimal convergence \citep{kim2023sdac, bai2022achieving, ding2022convergence}.
We also assume that the state and action spaces are finite.
Now, we introduce a generalized version of the policy gradient method that guarantees optimal convergence.
\begin{restatable}{theorem}{GeneralPolicyUpdate}
\label{thm: generalized policy update rule}
Assume that sequences $\nu^a_{t,i}$, $\nu^b_{t,i}$, $\lambda^a_{t,k}$, $\lambda^b_{t,k} \in [0, \lambda_\mathrm{max}]$ are given for all $i$ and $k$, where $\sum_k\lambda^b_{t,k} = 1$, $\lambda^b_{t,k}(J_{C_k}(\theta_t) - d_k) \geq 0$, $\sum_i\nu^a_{t,i}=1$, and $\nu^a_{t,i}$ converges to a specific point $\bar{\nu}^a_i$. 
Then, if a policy is updated according to the following rule, it converges to a CP optimal policy of (\ref{eq: cmorl problem}):
\begin{equation}
\label{eq: generalized policy update rule}
\theta_{t+1} = \theta_t + \begin{cases}
\alpha_t F^\dagger(\theta_t) \left( \sum_{i=1}^N \nu^a_{t,i}g_i - \alpha_t\sum_{k=1}^M \lambda^a_{t,k} b_k \right) & \text{if} \; J_{C_k}(\theta_t) \leq d_k \; \forall k, \\
\alpha_t F^\dagger(\theta_t) \left(\alpha_t \sum_{i=1}^N \nu^b_{t,i} g_i - \sum_{k=1}^M \lambda^b_{t,k} b_k \right) & \text{otherwise}, \\
\end{cases}
\end{equation}
where $g_i = \nabla J_{R_i}(\theta_t)$, $b_k = \nabla J_{C_k}(\theta_t)$, $\alpha_t$ is a step size satisfying Robbins-Monro condition \citep{robbins1951stochastic}, and $F^\dagger$ represents the pseudo-inverse of the Fisher information matrix.
\end{restatable}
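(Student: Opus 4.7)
The plan is to treat the recursion in (\ref{eq: generalized policy update rule}) as a two–time–scale stochastic approximation and then invoke convergence results for the natural policy gradient (NPG), since $F^\dagger(\theta_t)$ is exactly the NPG preconditioner. Observe that in the feasible branch the objective term enters at rate $\alpha_t$ while the constraint term is damped by an extra $\alpha_t$, and the roles are reversed in the infeasible branch. Hence, on the dominant time scale, the feasible update behaves like an NPG step on the scalarized reward $\tilde{J}(\theta):=\sum_i\nu^a_{t,i}J_{R_i}(\theta)$, while the infeasible update behaves like an NPG step on the weighted constraint violation $\sum_k\lambda^b_{t,k}(J_{C_k}(\theta)-d_k)$.

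First I would argue that the iterates almost never stay infeasible forever. In the infeasible branch, the dominant direction is $-F^\dagger\sum_k\lambda^b_{t,k}b_k$, and because $\lambda^b_{t,k}\geq 0$, $\sum_k\lambda^b_{t,k}=1$, and $\lambda^b_{t,k}(J_{C_k}(\theta_t)-d_k)\geq 0$, this is the NPG direction of a convex combination of currently violated constraints. Using Slater's condition together with the performance–difference lemma (as in Ding et al.\ and Agarwal et al.) one obtains a descent inequality of the form $\sum_k\lambda^b_{t,k}(J_{C_k}(\theta_{t+1})-d_k)\leq \sum_k\lambda^b_{t,k}(J_{C_k}(\theta_t)-d_k)-c\alpha_t\cdot(\text{violation})+O(\alpha_t^2)$, which, combined with the Robbins–Monro condition $\sum\alpha_t=\infty$, $\sum\alpha_t^2<\infty$, forces the positive part of the constraint violation to $0$ in the limit, so the feasible branch is eventually exercised infinitely often.

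Next, on the feasible branch I would apply the standard NPG convergence analysis to the slowly-varying objective $\sum_i\nu^a_{t,i}J_{R_i}(\theta)$: the assumption $\sum_i\nu^a_{t,i}=1$ and $\nu^a_{t,i}\to\bar\nu^a_i$ makes the drift in the weights vanish, so the asymptotic dynamics is an NPG ascent on $\bar J(\theta):=\sum_i\bar\nu^a_i J_{R_i}(\theta)$ with a vanishing primal–feasibility correction of order $\alpha_t$. The projection-lemma / ODE argument then yields that any limit point $\theta^\star$ is stationary for $\bar J$ on the feasible set. Finally I would combine this with the KKT conditions of the CMORL problem (\ref{eq: cmorl problem}): a feasible policy that maximizes a strictly positive convex combination $\sum_i\bar\nu^a_i J_{R_i}$ on the feasible set cannot be constrained-dominated, since any dominating policy would strictly increase this weighted sum while remaining feasible, a contradiction. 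Hence $\theta^\star$ is CP optimal.

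The main obstacle I expect is the coupling between the two branches and the fact that the weights $\lambda^a_{t,k}, \lambda^b_{t,k}$ are not required to converge or to match the actual KKT multipliers; the analysis must therefore avoid any reliance on dual convergence and instead use only the sign and boundedness conditions stated in the theorem. Handling this cleanly will likely require casting the argument in ODE form (Borkar-style) and carefully bounding the cross-terms $\alpha_t^2 F^\dagger(\nabla\text{constraint})$ and $\alpha_t^2 F^\dagger(\nabla\text{objective})$ so that they enter the Lyapunov function as $o(\alpha_t)$ perturbations, together with a uniform spectral bound on $F^\dagger(\theta_t)$ over the finite state–action space.
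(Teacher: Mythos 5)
Your high-level skeleton (use the infeasible branch together with Slater's condition to show the iterates cannot stay infeasible, then analyze the feasible branch as a natural-policy-gradient step on the scalarization $\sum_i\bar\nu^a_i J_{R_i}$, then pass from LS-optimality to CP-optimality) matches the paper's strategy in outline, and your last step is exactly the paper's Lemma that an LS-optimal policy is CP-optimal. However, there is a genuine gap at the center of your argument: your ODE/two-time-scale reasoning only delivers that limit points are \emph{stationary} for the scalarized objective on the feasible set, and you then invoke the LS$\Rightarrow$CP argument as if you had shown the limit \emph{maximizes} that scalarization. These are not the same thing: $J_{R_i}(\theta)$ under softmax parameterization is non-concave in $\theta$, so a stationary (or KKT) point of the scalarized problem need not be a constrained global maximizer, and the dominance-contradiction argument requires global maximality. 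Closing this gap is precisely where the real work lies, and it cannot be done by a generic Borkar-style ODE argument with a Lyapunov function; it requires exploiting the specific softmax/NPG structure. The paper does this by noting (via Lemma 4 of the CRPO analysis) that the preconditioned update is a multiplicative-weights step $\pi_{t+1}\propto\pi_t\exp(\theta_{t+1}-\theta_t)$, applying the performance-difference lemma to get, for an \emph{arbitrary comparator} $\mu$, a regret-type inequality in which $\mathbb{E}_{s\sim d^\mu_\rho}[D_\mathrm{KL}(\mu\|\pi_t)]$ telescopes and the errors are $O(\alpha_t^2)$, then summing over time: plugging in the Slater point $\pi_f$ forces $\sum_{t\in\mathcal N}\alpha_t=\infty$ (feasible steps carry infinite total step size), and plugging in the LS-optimal policy for the preference $\bar\nu^a$ forces the weighted suboptimality along feasible iterates to vanish, giving global LS-optimality of the limit rather than mere stationarity.

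Two further points. First, the recursion has a single step size $\alpha_t$ with the secondary term damped by an extra factor of $\alpha_t$; this is a vanishing-perturbation structure, not a genuine two-time-scale scheme, and the paper's analysis is entirely deterministic (exact gradients), using Robbins--Monro only through $\sum_t\alpha_t=\infty$ and $\sum_t\alpha_t^2<\infty$ in the summation argument, so the stochastic-approximation machinery you propose adds unverified requirements (noise conditions, iterate boundedness, uniform bounds on $F^\dagger$) without being needed. Second, your claim that the infeasible branch drives the violation to zero via a per-step descent inequality must contend with the time-varying weights $\lambda^b_{t,k}$, which are not assumed to converge; the paper sidesteps this by never arguing per-constraint descent and instead using only $\lambda^b_{t,k}(J_{C_k}(\theta_t)-d_k)\geq 0$, $\sum_k\lambda^b_{t,k}=1$, and Slater's condition inside the summed comparator inequality, which is the robust way to handle exactly the non-convergent-multiplier issue you flag as your main obstacle.
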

The proof is provided in Appendix \ref{sec: proof of generalized policy rule}.
The sequences $\nu^a_{t,i}$, $\nu^b_{t,i}$, $\lambda^a_{t,k}$, and $\lambda^b_{t,k}$ in Theorem \ref{thm: generalized policy update rule} represent the weights assigned to the gradients of the objective and constraint functions at each update step, and a new CMORL algorithm can be developed based on how the sequences are set. 
Similarly, we will show that CoMOGA can be formulated as (\ref{eq: generalized policy update rule}) by identifying these sequences with minor modifications.
Since the quadratic programming holds strong duality, we can express the solution of (\ref{eq: gradient aggregation}) as
$\bar{g}_\omega^\mathrm{ag} = H^{-1}(\sum_{i=1}^N \nu^*_i g_i - \sum_{k=1}^M \lambda^*_k H^{-1}b_k)$, where $\nu^*_i$ and $\lambda_k^*$ are the optimal Lagrange multipliers of the following dual problem:
\begin{equation*}
\begin{aligned}
\nu^*, \lambda^* = \underset{\nu \geq 0, \lambda \geq 0}{\mathrm{argmax}} &-\frac{1}{2} \left( \sum\nolimits_i \nu_i g_i - \sum\nolimits_k \lambda_k b_k \right)^T H^{-1} \left( \sum\nolimits_i \nu_i g_i - \sum\nolimits_k \lambda_k b_k \right) + \sum\nolimits_i \nu_i \omega_i e_i \\
&+ \sum\nolimits_k \lambda_k (J_{C_k}(\theta_\mathrm{old}) - d_k).
\end{aligned}
\end{equation*}
In case of the dual problem has no solution, we set $\nu_i^*$ and $\lambda_k^*$ to be $\bar{g}_\omega^\mathrm{ag} = 0$.
For the solution of (\ref{eq: violated gradient aggregation}), the same process is applied to achieve $\lambda_k^*$.
Then, the CoMOGA is modified as follows:
\begin{equation}
\label{eq: modified CoMOGA}
\begin{aligned}
& \bar{g}_\omega^\mathrm{ag} = H^{-1}\cdot \begin{cases}
\sum_{i=1}^N \frac{\nu_i^*}{\sum_j \nu_j^*}g_i - \epsilon_t\sum_{k=1}^M \min(\frac{\lambda_k^*}{\epsilon_t\sum_j \nu_j^*}, \lambda_\mathrm{max}) b_k &\!\! \text{if} \; J_{C_k}(\theta_t) \leq d_k \; \forall k, \\
- \sum_{k=1}^M (\lambda_k^*/(\sum_j \lambda_j^*)) b_k &\!\! \text{otherwise}, \\
\end{cases} \\
& g_\omega^\mathrm{ag} = \epsilon_t \bar{g}_\omega^\mathrm{ag} / \min(\max(||\bar{g}_\omega^\mathrm{ag}||_H, g_\mathrm{min}), g_\mathrm{max}),
\end{aligned}
\end{equation}
where $g_\mathrm{min}$, $g_\mathrm{max}$, $\lambda_\mathrm{max} \in \mathbb{R}_{\geq 0}$ are hyperparameters, and $\epsilon_t$ is the local region size satisfying Robbins-Monro condition.
Finally, we can show optimal convergence of the modified CoMOGA.
\begin{restatable}{theorem}{CoMOGA}
\label{thm: convergence of modified CoMOGA}
If $H$ is the Fisher information matrix and a policy $\pi_{\theta_t}$ is updated as $\theta_{t+1} = \theta_t + g_\omega^\mathrm{ag}$, where $g_\omega^\mathrm{ag}$ is defined in (\ref{eq: modified CoMOGA}), it converges to a CP optimal policy of (\ref{eq: cmorl problem}).
\end{restatable}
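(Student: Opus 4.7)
The plan is to reduce Theorem \ref{thm: convergence of modified CoMOGA} to Theorem \ref{thm: generalized policy update rule} by exhibiting the modified CoMOGA update as a special case of the generalized update rule (\ref{eq: generalized policy update rule}). Since $H$ is assumed to be the Fisher information matrix, the pseudo-inverse $F^\dagger$ coincides with $H^{-1}$ wherever $H$ is invertible (and the pseudo-inverse replaces it otherwise), so the matrix preconditioner in (\ref{eq: modified CoMOGA}) matches the one in (\ref{eq: generalized policy update rule}) automatically. What remains is to identify the four weight sequences and the step size $\alpha_t$.

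Next, I would invoke strong duality for the two QPs (\ref{eq: gradient aggregation}) and (\ref{eq: violated gradient aggregation}). Reading off the dual representation written just before (\ref{eq: modified CoMOGA}), the feasible-case solution is $\bar{g}_\omega^\mathrm{ag} = H^{-1}(\sum_i \nu_i^* g_i - \sum_k \lambda_k^* b_k)$, so after dividing the objective-side multipliers by $\sum_j \nu_j^*$ and rescaling the constraint-side multipliers by $1/(\epsilon_t \sum_j \nu_j^*)$ (absorbing the leftover factor $\epsilon_t \sum_j \nu_j^*$ into the step size), I would set
\begin{equation*}
\nu^a_{t,i} := \frac{\nu_i^*}{\sum_j \nu_j^*}, \qquad \lambda^a_{t,k} := \min\!\left(\frac{\lambda_k^*}{\epsilon_t\sum_j \nu_j^*},\lambda_\mathrm{max}\right).
\end{equation*}
In the recovery case, strong duality applied to (\ref{eq: violated gradient aggregation}) gives $\bar{g}_\omega^\mathrm{ag} = -H^{-1}\sum_k \lambda_k^* b_k$, so I set $\lambda^b_{t,k} := \lambda_k^*/\sum_j \lambda_j^*$ and $\nu^b_{t,i} := 0$. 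The required normalizations $\sum_i \nu^a_{t,i}=1$ and $\sum_k \lambda^b_{t,k}=1$ then hold by construction, and each weight lies in $[0,\lambda_\mathrm{max}]$ after possibly raising $\lambda_\mathrm{max}$. The effective step size produced by the rescaling in (\ref{eq: modified CoMOGA}) is $\alpha_t = \epsilon_t/\min(\max(\|\bar{g}_\omega^\mathrm{ag}\|_H,g_\mathrm{min}),g_\mathrm{max})$, which is sandwiched between $\epsilon_t/g_\mathrm{max}$ and $\epsilon_t/g_\mathrm{min}$ and therefore inherits the Robbins--Monro property $\sum_t\alpha_t=\infty$, $\sum_t\alpha_t^2<\infty$ from $\epsilon_t$.

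The hardest part will be verifying the two remaining conditions of Theorem \ref{thm: generalized policy update rule}: the complementary-slackness-type inequality $\lambda^b_{t,k}(J_{C_k}(\theta_t)-d_k)\geq 0$, and the convergence of $\nu^a_{t,i}$ to a limit $\bar\nu^a_i$. For the first, I would argue that in the recovery branch at least one constraint is violated, and that whenever $\lambda_k^*>0$ in (\ref{eq: violated gradient aggregation}) the KKT conditions together with the Slater point $\pi_f$ force $J_{C_k}(\theta_t)\geq d_k$: otherwise one could shrink $\lambda_k^*$ without losing dual feasibility while strictly increasing the dual objective, contradicting optimality. For strictly satisfied constraints the corresponding $\lambda_k^*=0$ by the same KKT argument, making the product vanish. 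For the second condition, I would first observe that the $\nu^a_{t,i}$ are bounded and hence admit convergent subsequences, and that along any convergent subsequence $\theta_{t_n}\to\theta^\infty$ the continuity of $g_i$ and $b_k$ in $\theta$, together with the QP being strongly convex in $\Delta\theta$, makes the primal minimizer and hence the normalized duals continuous in $\theta_t$. Combining this with the optimal convergence of $\theta_t$ that Theorem \ref{thm: generalized policy update rule} delivers under the weaker assumption that the sequence has a well-defined limiting weight vector, one obtains the full statement; if needed, I would strengthen the argument by passing to a convergent subsequence, applying Theorem \ref{thm: generalized policy update rule} there to conclude that every cluster point is CP optimal, and then using uniqueness up to the CP-front structure to upgrade to convergence of the original sequence.
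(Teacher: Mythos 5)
Your proposal takes essentially the same route as the paper's proof: reduce Theorem \ref{thm: convergence of modified CoMOGA} to Theorem \ref{thm: generalized policy update rule} by reading off $\nu^a_{t,i}=\nu_i^*/\sum_j\nu_j^*$, $\nu^b_{t,i}=0$, $\lambda^b_{t,k}=\lambda_k^*/\sum_j\lambda_j^*$ and $\alpha_t=\epsilon_t/\min(\max(\|\bar{g}_\omega^\mathrm{ag}\|_H,g_\mathrm{min}),g_\mathrm{max})$ from the QP duals of (\ref{eq: gradient aggregation}) and (\ref{eq: violated gradient aggregation}), verifying normalization, boundedness, the Robbins--Monro property, and $\lambda^b_{t,k}(J_{C_k}(\theta_t)-d_k)\geq 0$ via KKT complementary slackness, and arguing convergence of $\nu^a_{t,i}$ as the iterates and hence the QP stabilize (the paper's own argument here is no less informal than yours). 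The only discrepancy is bookkeeping: to reproduce (\ref{eq: modified CoMOGA}) exactly within (\ref{eq: generalized policy update rule}), where the constraint term appears as $\alpha_t\sum_k\lambda^a_{t,k}b_k$, the multiplier must be $\lambda^a_{t,k}=\min(\max(\|\bar{g}_\omega^\mathrm{ag}\|_H,g_\mathrm{min}),g_\mathrm{max})\cdot\min\bigl(\lambda_k^*/(\epsilon_t\sum_j\nu_j^*),\lambda_\mathrm{max}\bigr)$ rather than the unscaled clip you wrote, which is harmless since it remains bounded by $g_\mathrm{max}\lambda_\mathrm{max}$, exactly as the paper notes.
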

Theorem \ref{thm: convergence of modified CoMOGA} can be proved by identifying sequences $\nu_{t,i}^a$, $\nu_{t,k}^b$, $\lambda_{t,i}^a$, $\lambda_{t,k}^b$ that satisfy the conditions mentioned in Theorem \ref{thm: generalized policy update rule}, and details are provided in Appendix \ref{sec: proof of CoMOGA}.


\section{Practical Implementation}
\label{sec: prac implementation}


We need to calculate the gradient of the objective and constraint functions to perform CoMOGA and the universal policy update.
To do that, we use reward and cost critics $V_{R, \psi_R}^\pi$, $V_{C, \psi_C}^{\pi}$, where $\psi_R$ and $\psi_C$ are the critic network parameters, to estimate the objectives and constraint function.
The critics are trained by minimizing the following loss functions:
\begin{equation}
\label{eq: critic loss}
\begin{aligned}
\mathcal{L}(\psi_R) &:= \underset{(s,a,s', \omega) \sim \mathcal{D}}{\mathbb{E}}\left[ \sum\nolimits_{i=1}^N(Y_{R_i} - V^\pi_{R_i, \psi_R}(s, a, \omega))^2 \right], \\
\end{aligned}\end{equation}
where the target $Y_{R_i}$ can be calculated either as $R_i(s,a,s') + \gamma V^\pi_{R_i, \psi_R}(s', a', \omega)$ with $a' \sim \pi(\cdot|s', \omega)$ or using Retrace($\lambda$) \citep{munos2016retrace}, and the same process is applied to the cost critics.
Given a preference $\omega$, the objective and constraint functions can be estimated as follows:
\begin{equation}
\label{eq: obj and cons estimation}
\begin{aligned}
J_{R_i}(\theta) &\approx \underset{\begin{subarray}{c}s \sim \mathcal{D},\\ a \sim \pi_\theta(\cdot|s, \omega)\end{subarray}}{\mathbb{E}}\left[V_{R_i, \psi_R}^\pi(s, a, \omega) \right],\; J_{C_k}(\theta) \approx \underset{\begin{subarray}{c}s \sim \mathcal{D},\\ a \sim \pi_\theta(\cdot|s, \omega)\end{subarray}}{\mathbb{E}} \left[V_{C_k, \psi_C}^\pi(s, a, \omega) \right].
\end{aligned}
\end{equation}
Then, we can compute the policy gradients using the critic networks through the reparameterization trick, as done in the SAC paper \cite{haarnoja2018sac}.
Finally, the proposed method is summarized in Algorithm \ref{algo:proposed method}.


\begin{algorithm}[t]
\small
\caption{Policy Update Using CoMOGA}
\label{algo:proposed method}
\begin{algorithmic}
\STATE {\bfseries Input:} Policy parameter $\theta$, reward critic parameter $\psi_R$, cost critic parameter $\psi_C$, replay buffer $\mathcal{D}$.
\FOR{epochs$\;=1$ {\bfseries to} $E$}
    \FOR {rollouts$\;=1$ {\bfseries to} $L$}
        \STATE Sample a preference $\omega \sim \Omega$.
        \STATE Collect a trajectory $\tau = \{(s_t, a_t, r_t, c_t, s_{t+1})\}_{t=1}^T$ by using $\pi_\theta(\cdot|\cdot, \omega)$ and store $(\tau, \omega)$ in $\mathcal{D}$.
    \ENDFOR
    \STATE Update reward and cost critic networks using Equation (\ref{eq: critic loss}).
    \FOR{$p=1$ {\bfseries to} $P$}
        \STATE Sample $(\tau, \omega) \sim D$, and estimate the objectives and constraints using Equation (\ref{eq: obj and cons estimation}).
        \STATE Calculate the aggregated gradient $g^\mathrm{ag}_{\omega}$ using Equation (\ref{eq: modified CoMOGA}), and set $\theta_\omega = \theta_\mathrm{old} + g^\mathrm{ag}_{\omega}$.
        \STATE Store an intermediate policy $\bar{\pi}_\omega = \pi_{\theta_\omega}$.
    \ENDFOR
    \STATE Update the universal policy using Equation (\ref{eq: universal policy update}).
\ENDFOR
\end{algorithmic}
\end{algorithm}

\section{Experiments}
\label{sec: experiment}
This section evaluates the proposed method and baselines across various tasks with and without constraints. 
First, we explain how methods are evaluated on the tasks and then present the CMORL baselines.
Subsequently, each task is described, and the results are analyzed.
Finally, we conclude this section with ablation studies of the proposed method.

\subsection{Evaluation Metrics and Baselines}

\textbf{Metrics.}
Once the universal policy is trained, an estimated CP front, denoted as $P\subset \mathbb{R}^N$, can be obtained by calculating the objective values $(J_{R_1}(\pi), ..., J_{R_N}(\pi))$ for a range of preferences. 
Given that CMORL aims to approximate the ground truth CP front, it is essential to assess how close the estimated CP front is to the ground truth, and it can be realized by measuring the coverage and density of the estimated front.
To this end, we use the \textit{hypervolume} (HV) for coverage measurement and \emph{sparsity} (SP) for density estimation, which are commonly used in many MORL algorithms \citep{xu2020pgmorl, basaklar2023pdmorl}.
Given an estimated CP front $P \subset \mathbb{R}^N$ and a reference point $r \in \mathbb{R}^N$, the hypervolume (HV) is defined as follows \citep{zitzler1999hv}:
\begin{equation}
\label{eq: hypervolume}
\begin{aligned}
\mathrm{HV}(P, r) := \int_{\mathbb{R}^N} \mathbf{1}_{Q(P, r)} (z) dz,
\end{aligned}
\end{equation}
where $Q(P,r) := \{z|\exists p \in P \; \text{s.t.} \; r \preceq z \preceq p\}$, and the metric represents the volume of the area surrounding the reference point and the estimated CP front. 
For a detailed explanation of this metric, including a visual description and how the reference point is set, please refer to Appendix \ref{sec: performance metric}.
Sparsity (SP), on the other hand, measures how uniformly the estimated CP fronts are distributed and is defined as the average of the squared distances between elements of the CP fronts \citep{xu2020pgmorl}.
However, the original definition exhibits a correlation with HV, where a larger HV corresponds to a larger SP. 
To eliminate this inherent correlation, we propose to use the following normalized version of SP:
\begin{equation}
\label{eq: normalized sparsity}
\overline{\mathrm{SP}}(P) := \frac{1}{|P| - 1}\sum_{j=1}^{N}\sum_{i=1}^{|P| - 1} \left(\frac{\tilde{P}_j[i] - \tilde{P}_j[i+1]}{\mathrm{max}_k \tilde{P}_j[k] - \mathrm{min}_k \tilde{P}_j[k]}\right)^2,
\end{equation}
where $\tilde{P}_j := \mathrm{Sort}(\{p[j]|\forall p \in P\})$, and $\tilde{P}_j[i]$ is the $i$th element in $\tilde{P}_j$.
By normalizing the distance using the minimum and maximum values of the CP fronts, we can remove the correlation with HV while preserving the ability to measure sparsity.

\textbf{Baselines.}
We aim to compare the proposed method with various existing CMORL algorithms; however, to the best of our knowledge, \textit{LP3} \citep{huang22lp3} is the only existing CMORL algorithm.
To provide more comprehensive baselines, we include MORL algorithms that are based on policy gradient approaches, which can be adapted to handle constraints using the Lagrangian method. 
There are two state-of-the-art policy gradient-based MORL algorithms: \textit{PD-MORL} \citep{basaklar2023pdmorl}, which is based on TD3 \citep{fujimoto2018td3}, and \textit{CAPQL} \citep{lu2023multiobjective}, which is based on SAC \citep{haarnoja2018sac}.
We have extended them to CMORL by handling constraints using the Lagrangian method.
For implementation details, please refer to Appendix \ref{sec: ls}.

\subsection{Legged Robot Locomotion}

The legged robot locomotion tasks \citep{kim2023sdac} are to control a quadrupedal or bipedal robot to follow randomly given commands while satisfying three constraints: keeping 1) body balance, 2) CoM height, and 3) pre-defined foot contact timing.
Since the original tasks were designed to have a single objective, we have modified the tasks to have two objectives: 1) matching the current velocity with the command and 2) minimizing energy consumption.
Please see Appendix \ref{sec: experiment detail} for more details.

The evaluation results are presented in Figure \ref{fig: legged robot results}, and the estimated CP fronts are visualized in Appendix \ref{sec: additional results}.
Across all tasks, CoMOGA achieves the highest HV and the lowest SP while satisfying all constraints. 
Notably, CoMOGA surpasses the baselines in the bipedal task. 
This suggests that preventing gradient conflicts is particularly effective for dynamically unstable robots, such as bipedal robots, because these robots require a careful balance of multiple objectives to attain desired behaviors.
LP3 shows performance similar to the proposed method in the quadrupedal task but underperforms in the bipedal task. 
This may be attributed to its less effective handling of gradient conflicts.

\begin{figure}[t]
    \centering
    \includegraphics[width=1.0\textwidth]{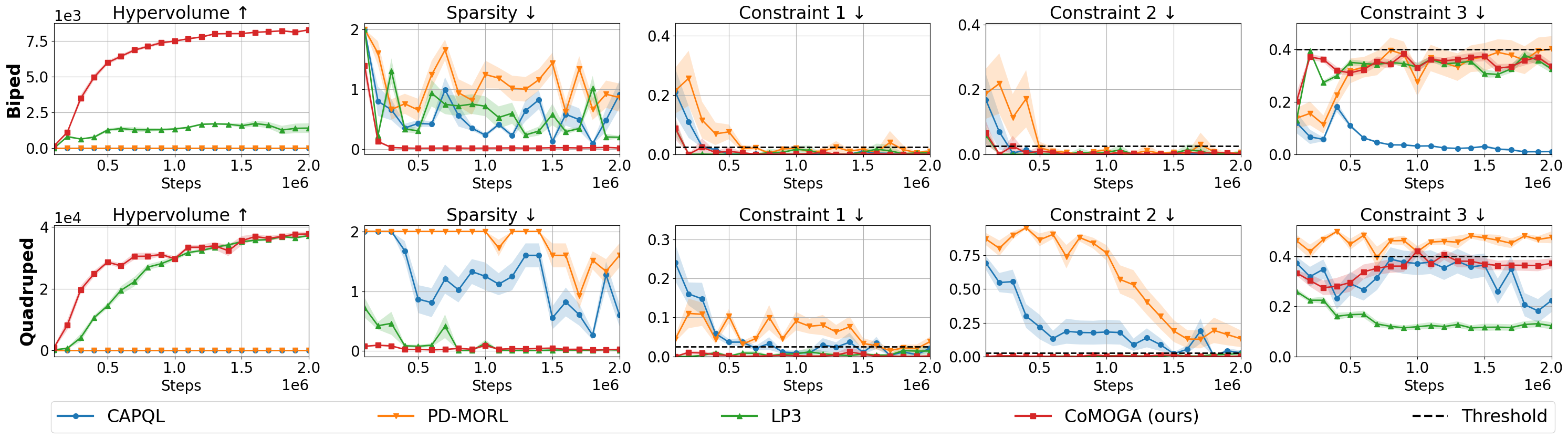}
    \vspace{-15pt}
    \caption{\small
    Evaluation results of the legged robot locomotion tasks.
    The upper row shows results for the bipedal robot, while the lower row is for the quadrupedal robot.
    All algorithms are evaluated at every $10^5$ steps. 
    The bold lines and shaded areas represent the mean and quarter-scaled standard deviation of results from five random seeds, respectively.
    The black dotted lines in constraint graphs indicate the thresholds.
    }
    \label{fig: legged robot results}
    \vspace{-10pt}
\end{figure}

\subsection{Safety Gymnasium}

We utilize single-agent and multi-agent goal tasks in the Safety Gymnasium environments \citep{ji2023safetygym}.
These tasks require navigating robots to designated goals while avoiding obstacles. 
The single-agent goal tasks have two objectives: 1) reaching goals as many times as possible within a time limit and 2) maximizing energy efficiency, along with a single constraint to avoid collisions with obstacles.
The multi-agent goal tasks have two objectives and two constraints, which are to maximize goal achievement and avoid collisions for two robots, respectively.
In both tasks, point and car robots are used.
For details on the tasks and hyperparameter settings, please refer to Appendix \ref{sec: experiment detail}.

The results are presented in Figure \ref{fig: safety gym results}.
CoMOGA shows the highest HV in all tasks and the lowest SP in two out of four tasks, while satisfying all constraints.
Additionally, the low volatility in the evaluation graph indicates that the proposed method can stably handle both constraints and objectives.
LP3 and PD-MORL violate constraints in at least one task, and CAPQL satisfies all constraints but shows weak performance in the single agent tasks, implying the difficulty of handling constraints and objectives simultaneously.

\begin{figure}[t]
    \centering
    \begin{subfigure}[b]{1.0\textwidth}
        \centering
        \includegraphics[width=1.0\textwidth]{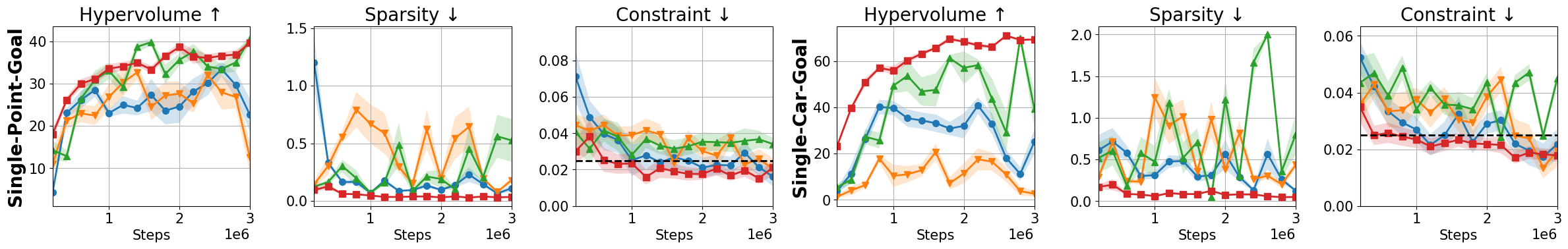}
        \vspace{-15pt}
        \caption{
            \small Single-agent tasks.
            The left shows the results for the point robot, and the right is for the car robot.
        }
    \end{subfigure}
    \hfill
    \begin{subfigure}[b]{1.0\textwidth}
        \centering
        \includegraphics[width=1.0\textwidth]{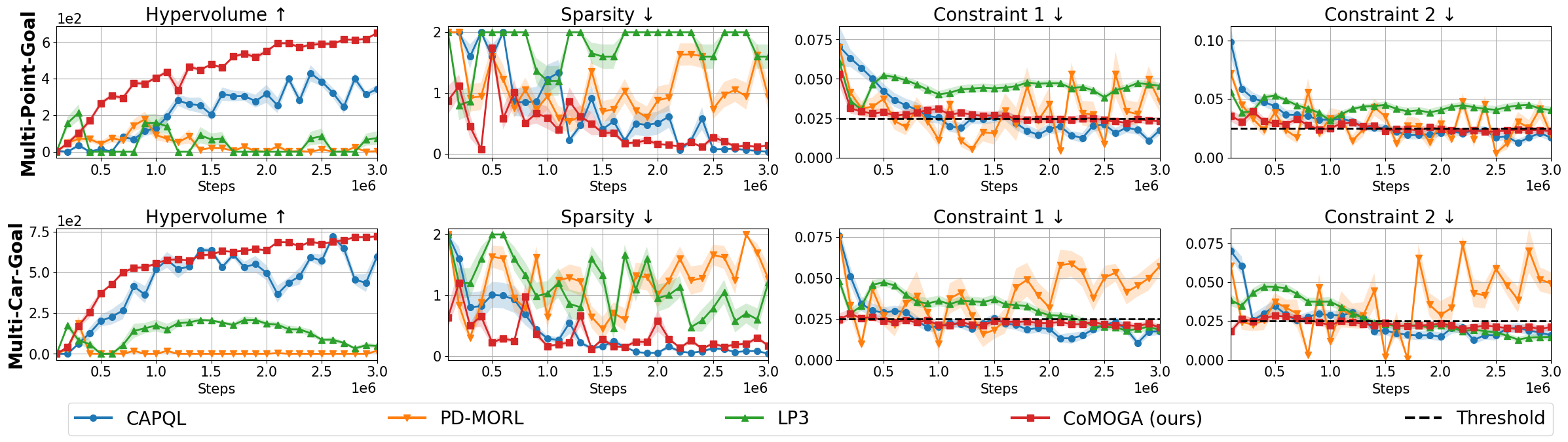}
        \vspace{-15pt}
        \caption{
            \small Multi-agent tasks.        
            The upper row shows the results for the point robot, and the lower is for the car robot.
        }
    \end{subfigure}
    \vspace{-15pt}
    \caption{\small Evaluation results of the Safety Gymnasium tasks.
    }
    \label{fig: safety gym results}
    \vspace{-10pt}
\end{figure}

\subsection{Multi-Objective Gymnasium}

We conduct experiments in the Multi-Objective (MO) Gymnasium \citep{alegre2022bnaic}, which is a well-known MORL environment, to examine whether the proposed method also performs well on unconstrained MORL tasks.
We use the MuJoCo tasks with continuous action spaces in the MO Gymnasium, and there are six tasks available: Hopper, Humanoid, Half-Cheetah, Walker2d, Ant, and Swimmer.
Each task has three, two, two, two, three and two objectives, respectively.
Details are provided in Appendix \ref{sec: experiment detail}.

The results are presented in Figure \ref{fig: mo gym results}, and the proposed method achieves the highest HV in three tasks and the lowest SP in five out of six tasks. 
Especially, CoMOGA exhibits outstanding performance in the hopper and humanoid tasks, which suggests that the conflict-averse strategy is effective for dynamically unstable robots even in unconstrained MORL settings.
However, all algorithms except LP3 show high HV results in the walker2d task, which is also dynamically unstable.
It can be considered that they have reached the global optimum due to the well-designed reward functions.

\begin{figure}[t]
    \centering
    \includegraphics[width=1.0\textwidth]{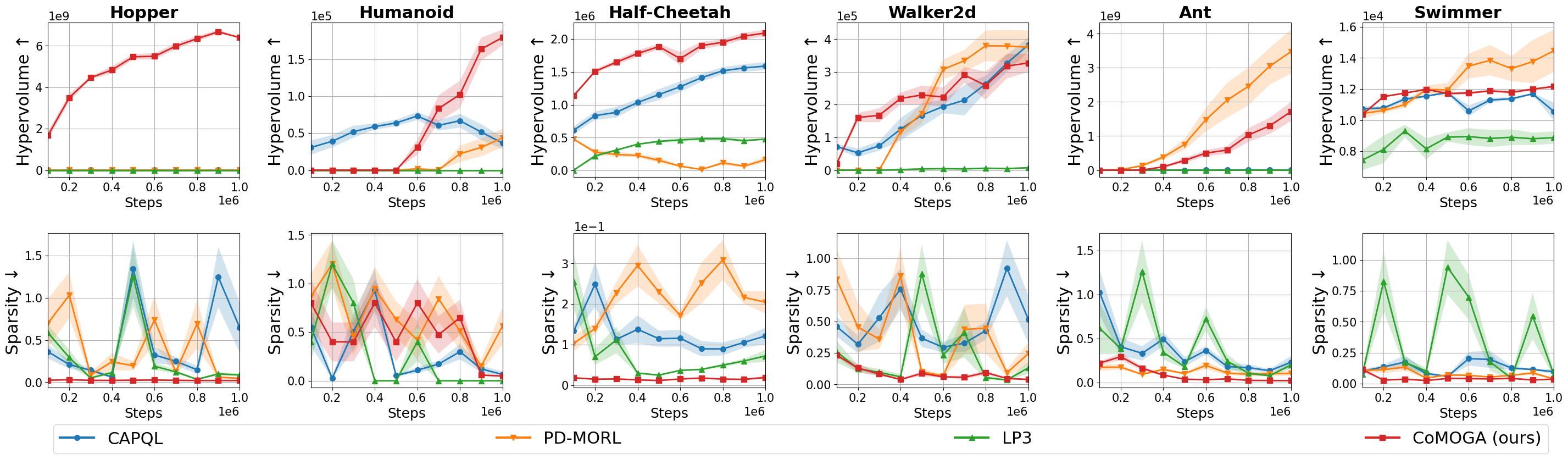}
    \vspace{-15pt}
    \caption{\small Evaluation results of the MO Gymnasium tasks. Each column shows the results of the task corresponding to its title.}
    \label{fig: mo gym results}
    \vspace{-10pt}
\end{figure}

\subsection{Ablation Study}

Some might suggest that conflict aversion in CMORL can be achieved by simply integrating MTL algorithms with the Lagrangian method. 
To answer this question, we compare a method that combines a conflict-averse MTL algorithm, CAGrad \citep{liu2021cagrad}, and the Lagrangian approach against CoMOGA in the legged locomotion tasks. 
The results are shown in Figure \ref{fig: ablation results}. 
CAGrad+Lagrangian violates the second constraint in the quadrupedal robot, and while it satisfies all constraints in the bipedal robot, HV and SP metrics are significantly lower than CoMOGA.
These results demonstrate that the proposed method can handle both constraints and gradient conflicts effectively.

\begin{figure}[t]
    \centering
    \includegraphics[width=1.0\textwidth]{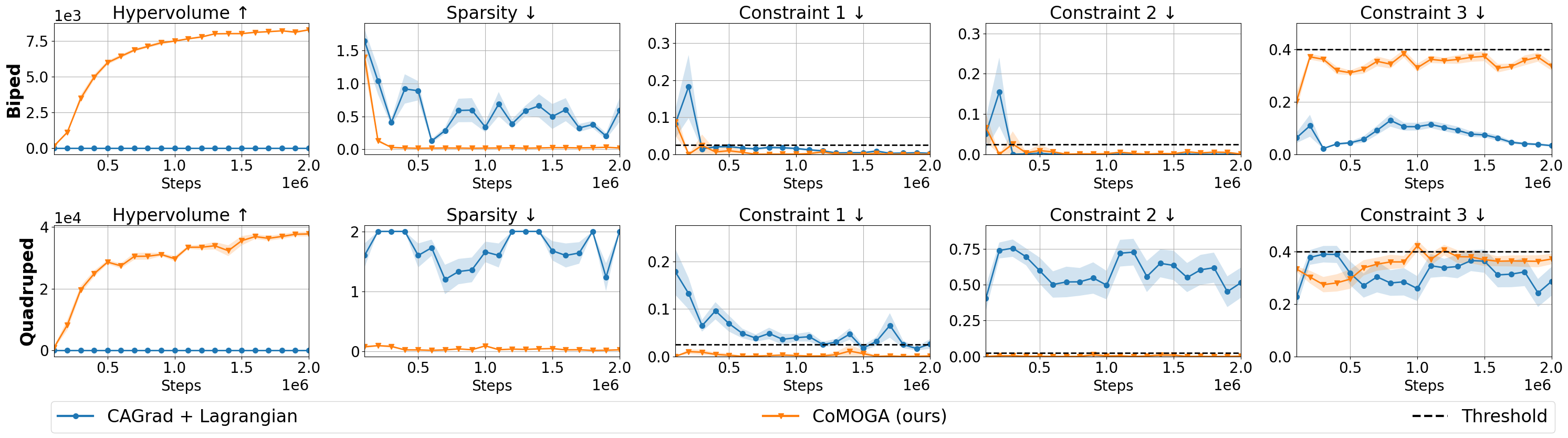}
    \vspace{-15pt}
    \caption{\small Results of the ablation study in the legged robot locomotion tasks.}
    \label{fig: ablation results}
    \vspace{-10pt}
\end{figure}

\section{Conclusions and Limitations}
\label{sec: conclusion}


We have introduced a CMORL algorithm called CoMOGA to maximize multiple objectives while satisfying safety constraints.
The proposed method is based on a novel transformation process that converts objectives into constraints, which enables the avoidance of gradient conflicts among multiple objectives and the handling of constraints without additional optimization variables.
We show that the proposed method converges to a CP optimal policy in tabular settings and demonstrate that it achieves outstanding performance in HV and SP metrics with constraint satisfaction through various experiments.
Since the proposed method updates the policy in a manner that prevents any objective functions from decreasing, the set of trained policies may cover a narrow portion of the CP front. 
Future research can address this limitation by mitigating the condition of gradient conflicts to better balance multiple objectives and expand coverage.
Specifically, by satisfying the conditions of the proposed generalized policy update rule in Theorem \ref{thm: generalized policy update rule}, various CMORL algorithms can be developed with convergence guarantees, which can accelerate future work.



\bibliographystyle{plainnat}
\bibliography{main}


\newpage
\appendix

\section{Toy Example}
\label{sec: toy example}


\begin{figure}[htb]
    \centering
    \begin{subfigure}[b]{0.4\textwidth}
        \centering
        \includegraphics[width=1.0\textwidth]{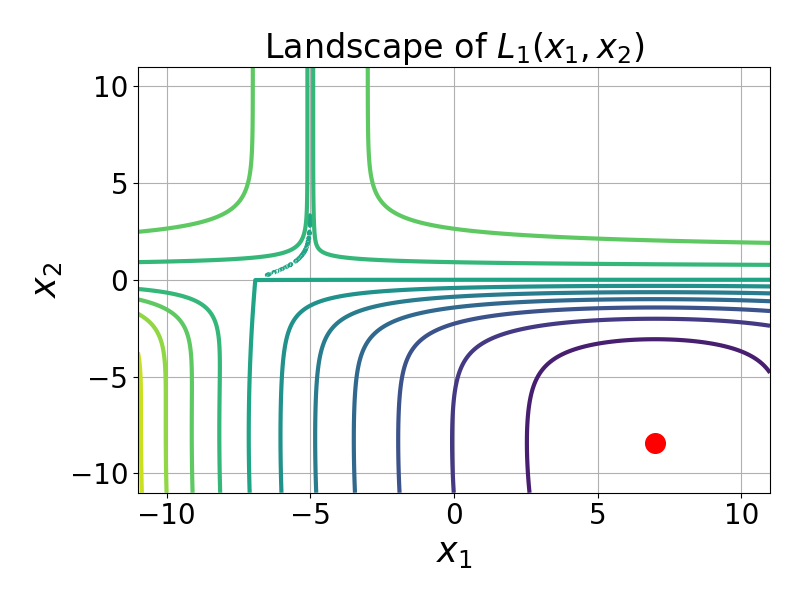}
    \end{subfigure}
    \hspace{15pt}
    \begin{subfigure}[b]{0.4\textwidth}
        \centering
        \includegraphics[width=1.0\textwidth]{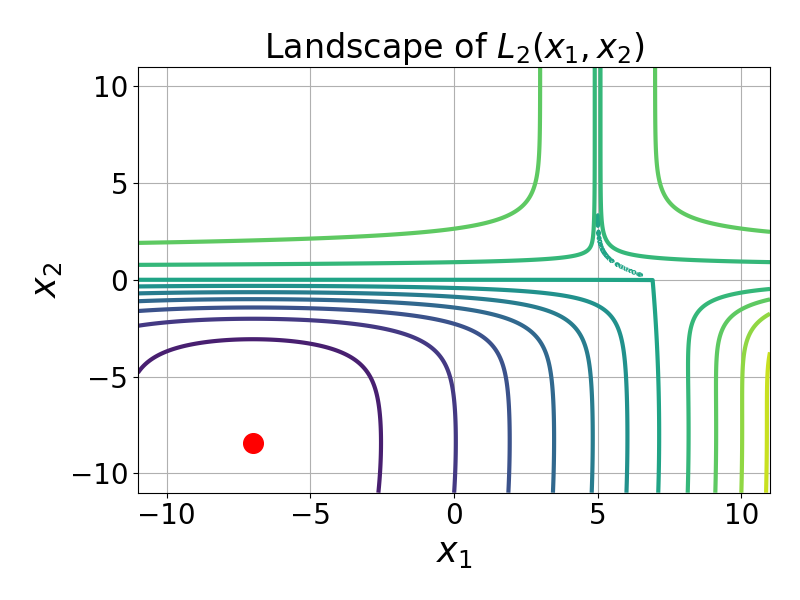}
    \end{subfigure}
    \caption{Landscape of the objective functions of the toy example.}
    \label{fig: landscape of toy example}
\end{figure}

In this section, we describe the details of the toy example, which is initially defined in \citep{liu2021cagrad}.
Objective functions of the toy example are formulated as follows:
\begin{equation*}
\begin{aligned}
L_1(x_1, x_2) &:= c_1(x_2)f_1(x_1, x_2) + c_2(x_2)g_1(x_1, x_2), \\
L_2(x_1, x_2) &:= c_1(x_2)f_2(x_1, x_2) + c_2(x_2)g_2(x_1, x_2), \; \text{where} \\
c_1(x) &:= \max(\mathrm{tanh}(0.5x), 0), \\
c_2(x) &:= \max(\mathrm{tanh}(-0.5x), 0), \\
f_1(x_1, x_2) &:= \log( \max(|0.5(-x_1 - 7) - \mathrm{tanh}(-x_2)|, 0.000005)) + 6, \\
f_2(x_1, x_2) &:= \log( \max(|0.5(-x_1 + 3) + \mathrm{tanh}(-x_2 + 2)|, 0.000005)) + 6, \\
g_1(x_1, x_2) &:= ((-x_1 + 7)^2 + 0.1(-x_2 - 8)^2)/10- 20, \\
g_2(x_1, x_2) &:= ((-x_1 - 7)^2 + 0.1(-x_2 - 8)^2)/10- 20, \\
\end{aligned}
\end{equation*}
where $L_1$ and $L_2$ are the objective functions.
The landscape of the objective functions are presented in Figure \ref{fig: landscape of toy example}.
In addition, we define a constraint function, which is defined as follows:
\begin{equation*}
C(x_1, x_2) := x_1^2 + 0.3(x_2 - 10)^2 - 10.5^2.
\end{equation*}
Finally, the toy example is formulated as follows:
\begin{equation*}
\min_{x_1, x_2} L_1(x_1, x_2) \; \text{and} \; L_2(x_1, x_2) \quad \textbf{s.t.} \; C(x_1, x_2) \leq 0.
\end{equation*}

We set the preference as $(0.5, 0.5)$ for both algorithms, LS and CoMOGA, and use the following four initial points: $(-10, 0)$, $(-10, 7.5)$, $(0, 7.5)$, and $(10, 10)$.
The optimization trajectories for each method are shown in Figure \ref{fig: toy example}.
For the initial point $(-10, 7.5)$, the gradient of $L_1$ is much larger than $L_2$.
Thus, addressing the toy example with LS can be significantly influenced by $L_1$ and eventually converges to a local optimal point of $L_1$.
To resolve this issue, avoiding gradient conflicts can prevent policy gradients from being dominated by a specific objective, which helps them converge to an optimal point.

\newpage
\section{Proofs}
\label{sec: proof}

\subsection{Proof of Transformation Process}
\label{sec: proof of transformation}

In this section, we prove that the solutions of (\ref{eq: simplified problem}) and (\ref{eq: simplified converted problem}) are equivalent.
Under the linear assumption, we can express the $i$th objective function as $g_i^T \Delta \theta + J_{R_i}(\theta_\mathrm{old})$.
Then, (\ref{eq: simplified problem}) can be rewritten as follows:
\begin{equation*}
\max_{\Delta \theta} \; g_i^T\Delta \theta + J_{R_i}(\theta_\mathrm{old}) \;\; \mathrm{s.t.} \; \Delta \theta^TH \Delta \theta \leq \epsilon^2.
\end{equation*}
As the strong duality holds, we will find the solution by solving the following Lagrange dual problem:
\begin{equation*}
\max_{\lambda \geq 0} \min_{\Delta \theta} \; -g_i^T \Delta \theta + \lambda(\Delta \theta^TH \Delta \theta - \epsilon^2) =: L(\Delta \theta, \lambda).
\end{equation*}
Then, $\Delta \theta^*(\lambda) = \mathrm{argmin}_{\Delta \theta} L(\Delta \theta, \lambda) = H^{-1}g_i/(2\lambda)$.
By replacing $\Delta \theta$ with $\Delta \theta^*(\lambda)$ in the dual problem, we get $\lambda^* = \mathrm{argmax}_{\lambda} L(\Delta \theta^*(\lambda), \lambda) = \sqrt{g_i^T H^{-1} g_i}/(2\epsilon)$.
Finally, we obtain the solution, $\Delta \theta^* = \Delta \theta^*(\lambda^*) = \epsilon H^{-1} g_i / \sqrt{g_i^T H^{-1} g_i}$.

Now let us find the solution of (\ref{eq: simplified converted problem}).
Under the linear assumption, we can rewrite (\ref{eq: simplified converted problem}) as follows:
\begin{equation}
\min_{\Delta \theta} \; \Delta \theta^T H \Delta \theta \;\; \mathrm{s.t.} \; e_i \leq g_i^T \Delta \theta.
\end{equation}
Similar to the above process, the Lagrange dual problem is derived as follows:
\begin{equation}
\max_{\lambda \geq 0} \min_{\Delta \theta} \; \Delta \theta^T H \Delta \theta + \lambda(e_i - g_i^T\Delta \theta) =: L(\Delta \theta, \lambda).
\end{equation}
Then, the solution of the dual problem is obtained as $\Delta \theta^*(\lambda) = \mathrm{argmin}_{\Delta \theta} L(\Delta \theta, \lambda) = \lambda H^{-1}g_i/2$.
Using $\Delta \theta^*(\lambda)$, we can get the optimal Lagrange multiplier as $\lambda^* = \mathrm{argmax}_{\lambda} L(\Delta \theta^*(\lambda), \lambda) = 2 e_i / (g_i H^{-1} g_i) = 2\epsilon/\sqrt{g_i H^{-1} g_i}$.
Then, the solution is $\Delta \theta^* = \Delta \theta^*(\lambda^*) = \epsilon H^{-1} g_i / \sqrt{g_i^T H^{-1} g_i}$.
As a result, the solutions of (\ref{eq: simplified problem}) and (\ref{eq: simplified converted problem}) are the same as $\epsilon H^{-1} g_i / \sqrt{g_i^T H^{-1} g_i}$.

\subsection{Proof of Theorem \ref{thm: generalized policy update rule}}
\label{sec: proof of generalized policy rule}


We first define the following notions for a policy $\pi$:
\begin{equation*}
\begin{aligned}
d_\rho^\pi(s) &:=(1-\gamma)\sum_{t=0}^\infty \gamma^t \mathrm{Pr}(s_t=s) \Big|  s_0 \sim \rho, a_t \sim \pi(\cdot|s_t), s_{t+1} \sim P(\cdot|s_t, a_t) \; \forall t, \\
V^\pi(s)&:= \mathbb{E} \left[ \sum_{t=0}^\infty \gamma^t R(s_t, a_t, s_{t+1}) \Big| s_0 = s, a_t \sim \pi(\cdot|s_t), s_{t+1} \sim P(\cdot|s_t, a_t)\; \forall t \right], \\
Q^\pi(s,a)&:= \mathbb{E} \left[ \sum_{t=0}^\infty \gamma^t R(s_t, a_t, s_{t+1}) \Big| s_0 = s, a_0=a, s_{t+1} \sim P(\cdot|s_t, a_t), a_{t+1} \sim \pi(\cdot|s_{t+1}) \; \forall t \right], \\
A^\pi(s,a) &:= Q^\pi(s,a) - V^\pi(s). \\
\end{aligned}
\end{equation*}
We also simplify several notations for brevity as follows:
\begin{equation*}
\pi_{\theta_{t}} \rightarrow \pi_t, \; d_\rho^{\pi_{\theta_t}} \rightarrow d_\rho^{t}, \; A_{R_i}^{\pi_{\theta_t}}(s,a) \rightarrow A_{R_i}^t(s,a), \; A_{C_k}^{\pi_{\theta_t}}(s,a) \rightarrow A_{C_k}^t(s,a).
\end{equation*}
Since it is assumed that the state and action spaces are finite, a policy can be parameterized using a softmax parameterization as follows \citep{agarwal2021theory}:
\begin{equation*}
\pi_\theta(a|s) := \exp(\theta(s, a)) \Big/ \sum_{a'\in A} \exp(\theta(s, a')),
\end{equation*}
where $\theta \in \mathbb{R}^{S\times A}$ is a trainable parameter, and $\theta(s,a)$ denotes the value corresponding to state $s$ and action $a$.
Through Lemma 4 in \citep{xu2021crpo}, the following equations are satisfied when a policy is updated according to (\ref{eq: generalized policy update rule}):
\begin{equation*}
\begin{aligned}
&\theta_{t+1} = \theta_{t}(s, a) + \frac{\alpha_t}{1-\gamma} \cdot \begin{cases}
(\sum_{i=1}^N \nu_{t,i}^a A_{R_i}^t - \alpha_t \sum_{k=1}^M \lambda_{t,k}^a A_{C_k}^t) & \text{if} \; J_{C_k}(\theta_t) \leq d_k \; \forall k,\\
(\alpha_t\sum_{i=1}^N \nu_{t,i}^b A_{R_i}^t - \sum_{k=1}^M \lambda_{t,k}^b A_{C_k}^t) & \text{otherwise},
\end{cases}\\
&\pi_{t+1}(a|s) = \pi_{t}(a|s)\frac{\exp(\theta_{t+1}(s, a) - \theta_{t}(s, a))}{Z_t(s)}, \\
&\text{where} \; Z_t(s):= \sum_{a\in A} \pi_{t}(a|s) \exp(\theta_{t+1}(s, a) - \theta_{t}(s, a)).
\end{aligned}
\end{equation*}
Now, we introduce a lemma showing that the LS approach guarantees to converge a CP optimal policy, followed by the proof of Theorem \ref{thm: convergence of modified CoMOGA}.
\begin{lemma}
\label{lemma: LS is CP}
Given a preference $\omega$, let us define an LS optimal policy as follows:
\begin{equation*}
\pi_\omega^* := \underset{\pi}{\arg \max} \sum_{i=1}^N \omega_i J_{R_i}(\pi) \quad \mathbf{s.t.} \; J_{C_k}(\pi) \leq d_k \; \forall k.
\end{equation*}
Then, the LS optimal policy is also a CP optimal policy.
\end{lemma}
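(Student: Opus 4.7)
The plan is to prove the contrapositive by the standard weighted-sum argument for linear scalarization. Suppose, for the sake of contradiction, that the LS optimal $\pi_\omega^*$ is not CP optimal. Then by the definition of constrained dominance there exists a feasible policy $\pi'$ (i.e., $J_{C_k}(\pi') \leq d_k$ for every $k$) that dominates $\pi_\omega^*$, meaning $J_{R_i}(\pi') \geq J_{R_i}(\pi_\omega^*)$ for all $i$ with strict inequality at some index $j$.

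Next I would take the weighted sum of these inequalities against the preference $\omega$. Because every $\omega_i \geq 0$ and at least one $\omega_i$ equals $1$ (from $\|\omega\|_\infty = 1$), provided $\omega_j > 0$ at the strict-improvement index $j$, the weighted inequalities combine to
\begin{equation*}
\sum_{i=1}^N \omega_i J_{R_i}(\pi') \; > \; \sum_{i=1}^N \omega_i J_{R_i}(\pi_\omega^*),
\end{equation*}
while $\pi'$ remains feasible for the scalarized problem. This directly contradicts the defining property of $\pi_\omega^*$ as the maximizer of $\sum_i \omega_i J_{R_i}$ over the constraint set, completing the main case.

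The one subtlety — and the place I expect to spend the most care — is when the strict improvement of $\pi'$ over $\pi_\omega^*$ occurs only at indices $j$ with $\omega_j = 0$. Then the weighted sums only satisfy a weak inequality, so $\pi'$ is another LS maximizer rather than a strict improver, and the naive contradiction fails. To close this gap I would either restrict attention to $\omega$ in the interior of $\Omega$, or apply a lexicographic tie-breaking: among the compact set of LS optimizers, select one that is maximal in the remaining components; this refined selection is CP optimal by construction and can be identified with $\pi_\omega^*$ without loss of generality. This refinement is standard in the multi-objective optimization literature and does not depend on any RL-specific structure, so the overall argument is short once the tie-breaking convention is fixed.
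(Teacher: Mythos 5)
Your core argument is exactly the paper's: assume a feasible policy dominates $\pi_\omega^*$, take the $\omega$-weighted sum of the objective inequalities, and contradict LS optimality. The only divergence is in how "dominates" is read. The paper's notion of constrained dominance (as used in its proof) requires a \emph{strict} improvement in every objective, i.e.\ $J_{R_i}(\mu) > J_{R_i}(\pi_\omega^*)$ for all $i$; since $\|\omega\|_\infty = 1$ guarantees at least one strictly positive weight, the scalarized value strictly increases and the contradiction is immediate — the zero-weight corner case you flag never arises. You instead work with the standard Pareto notion (weak improvement everywhere, strict at some index $j$), under which your worry is legitimate: if the only strict improvement sits at an index with $\omega_j = 0$, the lemma as literally stated can fail, and your fixes (interior preferences, or lexicographic tie-breaking among LS maximizers) prove a refined statement rather than the lemma verbatim. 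So your proposal is correct and slightly more careful than needed for the paper's definition, but be aware that the extra tie-breaking machinery is doing work only because you strengthened the dominance definition; under the paper's definition the two-line weighted-sum argument closes the proof on its own.
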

\begin{proof}
Let us assume that there exists a policy $\mu$ that dominates the LS optimal policy $\pi_\omega^*$.
By definition of the constrained dominance, the following is satisfied:
\begin{equation*}
J_{R_i}(\mu) > J_{R_i}(\pi_\omega^*) \; \forall i \; \text{and} \; J_{C_k}(\mu) \leq d_k \; \forall k. \Rightarrow \sum_{i=1}^N \omega_i (J_{R_i}(\mu) - J_{R_i}(\pi_\omega^*)) > 0.
\end{equation*}
However, this equation contradicts the fact that $\pi_\omega^*$ is an LS optimal policy.
Consequently, $\pi_\omega^*$ is not dominated by any others, it is also a CP optimal policy.
\end{proof}

\GeneralPolicyUpdate*
\begin{proof}
We first consider the case $J_{C_k}(\theta_{t}) \leq d_k \; \forall k$, in which the policy is updated as follows:
\begin{equation*}
\theta_{t+1} = \theta_{t} + \frac{\alpha_t}{1-\gamma}(\sum_{i=1}^N \nu_{t,i}^a A_{R_i}^t - \alpha_t \sum_{k=1}^M \lambda_{t,k}^a A_{C_k}^t).
\end{equation*}
Using Lemma 1 in \citep{schulman2015trpo}, we can derive the following inequality:
\begin{equation}
\label{eq: performance difference}
\begin{aligned}
&\sum_i \nu_{t, i}^a(J_{R_i}(\theta_{t+1}) - J_{R_i}(\theta_{t})) - \alpha_t \sum_k \lambda_{t, k}^a(J_{C_k}(\theta_{t+1}) - J_{C_k}(\theta_{t})) \\
&= \frac{1}{1-\gamma} \mathbb{E}_{s \sim d_\rho^{t+1}} \left[ \mathbb{E}_{a \sim \pi_{t+1}(\cdot|s)} \left[ \sum_{i=1}^N \nu_{t,i}^a A_{R_i}^t(s, a) - \alpha_t \sum_{k=1}^M \lambda_{t,k}^a A_{C_k}^t(s, a) \right] \right] \\
&= \frac{1}{\alpha_t} \mathbb{E}_{s \sim d_\rho^{t+1}} \left[ \mathbb{E}_{a \sim \pi_{t+1}(\cdot|s)} \left[ \theta_{t+1}(s, a) - \theta_t(s, a)\right] \right] \\
& = \frac{1}{\alpha_t} \mathbb{E}_{s \sim d_\rho^{t+1}} \left[ \mathbb{E}_{a \sim \pi_{t+1}(\cdot|s)} \left[\log\frac{\pi_{t+1}(a|s) Z_t(s)}{\pi_t(a|s)} \right] \right] \\
& = \frac{1}{\alpha_t} \mathbb{E}_{s \sim d_\rho^{t+1}} \left[ D_\mathrm{KL}(\pi_{t+1}(\cdot|s)||\pi_t(\cdot|s)) + \log Z_t(s) \right] \\
&\geq \frac{1}{\alpha_t} \mathbb{E}_{s \sim d_\rho^{t+1}} \left[ \log Z_t(s) \right] \overset{\text{(i)}}{\geq} \frac{1-\gamma}{\alpha_t} \mathbb{E}_{s \sim \rho} \left[ \log Z_t(s) \right],
\end{aligned}
\end{equation}
where (i) follows from the fact that $d_\rho^\pi = (1-\gamma) \sum_{t=0}^\infty \gamma^t \mathrm{Pr}(s_t =s) \geq (1-\gamma)\rho(s)$.
By Lemma 5 in \citep{xu2021crpo},
\begin{equation*}
\begin{aligned}
|J_{R_i}(\theta_{t+1}) - J_{R_i}(\theta_{t})| &\leq \frac{2 R_\mathrm{max}}{1-\gamma}||\theta_{t+1} - \theta_t||_2 \; \forall i, \\
|J_{C_k}(\theta_{t+1}) - J_{C_k}(\theta_{t})| &\leq \frac{2 R_\mathrm{max}}{1-\gamma}||\theta_{t+1} - \theta_t||_2 \; \forall k. \\
\end{aligned}
\end{equation*}
\begin{equation*}
\begin{aligned}
\Rightarrow \; &\Big| \sum_i \nu_{t, i}^a(J_{R_i}(\theta_{t+1}) - J_{R_i}(\theta_{t})) - \alpha_t \sum_k \lambda_{t, k}^a(J_{C_k}(\theta_{t+1}) - J_{C_k}(\theta_{t})) \Big| \\
& \leq \frac{2R_\mathrm{max}}{1-\gamma}(N + \alpha_t M)\lambda_{\max}||\theta_{t+1} - \theta_t||_2 \\
&= \frac{2R_\mathrm{max}}{1-\gamma}(N + \alpha_t M)\lambda_{\max} \Big|\Big| \frac{\alpha_t}{1-\gamma}(\sum_{i=1}^N \nu_{t,i}^a A_{R_i}^t - \alpha_t \sum_{k=1}^M \lambda_{t,k}^a A_{C_k}^t) \Big|\Big|_2 \\
&\leq \frac{2\alpha_t R^2_\mathrm{max}}{(1-\gamma)^3}(N + \alpha_t M)^2\lambda_{\max}^2 \sqrt{|S| |A|}.
\end{aligned}
\end{equation*}
Then, (\ref{eq: performance difference}) can be rewritten as follows:
\begin{equation}
\label{eq: logZ inequality}
\frac{2\alpha_t R^2_\mathrm{max}}{(1-\gamma)^3}(N + \alpha_t M)^2\lambda_{\max}^2 \sqrt{|S| |A|} \geq \frac{1-\gamma}{\alpha_t} \mathbb{E}_{s \sim \rho} \left[ \log Z_t(s) \right],
\end{equation}
which is satisfied for any $\rho$.
Finally, we derive the following inequality for a policy $\mu$ as follows:
\begin{equation}
\label{eq: CS inequality}
\begin{aligned}
&\sum_i \nu_{t, i}^a(J_{R_i}(\mu) - J_{R_i}(\theta_{t})) - \alpha_t \sum_k \lambda_{t, k}^a(J_{C_k}(\mu) - J_{C_k}(\theta_{t})) \\
&= \frac{1}{1-\gamma} \mathbb{E}_{s \sim d_\rho^\mu} \left[ \mathbb{E}_{a \sim \mu(\cdot|s)} \left[ \sum_{i=1}^N \nu_{t,i}^a A_{R_i}^t(s, a) - \alpha_t \sum_{k=1}^M \lambda_{t,k}^a A_{C_k}^t(s, a) \right] \right] \\
& = \frac{1}{\alpha_t} \mathbb{E}_{s \sim d_\rho^\mu} \left[ \mathbb{E}_{a \sim \mu(\cdot|s)} \left[\log\frac{\pi_{t+1}(a|s) Z_t(s)}{\pi_t(a|s)} \right] \right] \\
& = \frac{1}{\alpha_t} \mathbb{E}_{s \sim d_\rho^\mu} \left[ D_\mathrm{KL}(\mu(\cdot|s)||\pi_t(\cdot|s)) - D_\mathrm{KL}(\mu(\cdot|s)||\pi_{t+1}(\cdot|s)) + \log Z_t(s) \right] \\
&\overset{\text{(i)}}{\leq} \frac{1}{\alpha_t} \mathbb{E}_{s \sim d_\rho^\mu} \left[ D_\mathrm{KL}(\mu(\cdot|s)||\pi_t(\cdot|s)) - D_\mathrm{KL}(\mu(\cdot|s)||\pi_{t+1}(\cdot|s)) \right] \\
& \quad + \frac{2\alpha_t R^2_\mathrm{max}}{(1-\gamma)^3}(N + \alpha_t M)^2\lambda_{\max}^2 \sqrt{|S| |A|}, \\
\end{aligned}
\end{equation}
where (i) is from (\ref{eq: logZ inequality}) by replacing $\rho$ with $d_\rho^\mu$.
For brevity, we denote $\mathbb{E}_{s \sim d_\rho^\mu} \left[ D_\mathrm{KL}(\mu(\cdot|s)||\pi_t(\cdot|s)) \right]$ as $D_\mathrm{KL}(\mu||\pi_t)$.
Now, we consider the second case where the constraints are violated.
In this case,
\begin{equation*}
\theta_{t+1} = \theta_{t} + \frac{\alpha_t}{1-\gamma}(\alpha_t \sum_{i=1}^N \nu_{t,i}^b A_{R_i}^t - \sum_{k=1}^M \lambda_{t,k}^b A_{C_k}^t),
\end{equation*}
which is symmetrical to the first case, achieved by replacing $\nu_{t,i}^a \rightarrow \alpha_t \nu_{t,i}^b$ and $\alpha_t\lambda_{t,k}^a \rightarrow \lambda_{t,k}^b$.
Using this symmetry property, we can obtain the following inequality from (\ref{eq: CS inequality}):
\begin{equation}
\label{eq: CV inequality}
\begin{aligned}
&\alpha_t \sum_i \nu_{t, i}^b(J_{R_i}(\mu) - J_{R_i}(\theta_{t})) - \sum_k \lambda_{t, k}^b(J_{C_k}(\mu) - J_{C_k}(\theta_{t})) \\
&\leq \frac{1}{\alpha_t}( D_\mathrm{KL}(\mu||\pi_t) - D_\mathrm{KL}(\mu||\pi_{t+1})) + \frac{2\alpha_t R^2_\mathrm{max}}{(1-\gamma)^3}(\alpha_t N + M)^2\lambda_{\max}^2 \sqrt{|S| |A|}. \\
\end{aligned}
\end{equation}
Now, we define a set of time steps, $\mathcal{N}:= \{t| J_{C_k}(\pi_t) \leq d_k \; \forall k\}$.
Then, by summing (\ref{eq: CS inequality}) and (\ref{eq: CV inequality}) over several time steps, the following is satisfied for a policy $\mu$:
\begin{equation}
\label{eq: summation of all time step}
\begin{aligned}
&\sum_{t\in\mathcal{N}} \left( \alpha_t \sum_i \nu_{t, i}^a(J_{R_i}(\mu) - J_{R_i}(\theta_{t})) - \alpha_t^2 \sum_k \lambda_{t, k}^a(J_{C_k}(\mu) - J_{C_k}(\theta_{t})) \right) \\
&+ \sum_{t\notin\mathcal{N}} \left( \alpha_t^2 \sum_i \nu_{t, i}^b(J_{R_i}(\mu) - J_{R_i}(\theta_{t})) - \alpha_t \sum_k \lambda_{t, k}^b(J_{C_k}(\mu) - J_{C_k}(\theta_{t})) \right) \\
&\leq D_\mathrm{KL}(\mu||\pi_0) + \sum_t \alpha_t^2 \underbrace{\frac{2 R^2_\mathrm{max}}{(1-\gamma)^3}\max(\alpha_0, 1)^2(N + M)^2\lambda_{\max}^2 \sqrt{|S| |A|}}_{=: \kappa_1}.
\end{aligned}
\end{equation}
If $t \notin \mathcal{N}$, $\lambda_{t,k}^b = 0$ for $J_{C_k}(\theta_t) \leq d_k$, which results in $\sum_k \lambda_{t,k}^b (J_{C_k}(\pi_f) - J_{C_k}(\theta_t)) \leq -\eta$.
By using this fact and replacing $\mu$ in (\ref{eq: summation of all time step}) with $\pi_f$,
\begin{equation*}
\begin{aligned}
&\sum_{t\in\mathcal{N}} \left( \alpha_t \sum_i \nu_{t, i}^a(J_{R_i}(\pi_f) - J_{R_i}(\theta_{t})) - \alpha_t^2 \sum_k \lambda_{t, k}^a(J_{C_k}(\pi_f) - J_{C_k}(\theta_{t})) \right) \\
&+ \sum_{t\notin\mathcal{N}} \left( \alpha_t^2 \sum_i \nu_{t, i}^b(J_{R_i}(\pi_f) - J_{R_i}(\theta_{t})) + \alpha_t\eta \right) \leq D_\mathrm{KL}(\pi_f||\pi_0) + \kappa_1 \sum_t \alpha_t^2.
\end{aligned}
\end{equation*}
\begin{equation*}
\begin{aligned}
&\Rightarrow \sum_{t\in\mathcal{N}} \left( \alpha_t \sum_i \nu_{t, i}^a(J_{R_i}(\pi_f) - J_{R_i}(\theta_{t})) \right) + \sum_{t\notin\mathcal{N}} \alpha_t\eta \\
&\leq D_\mathrm{KL}(\pi_f||\pi_0) + \kappa_1 \sum_t \alpha_t^2 + \sum_{t\in\mathcal{N}}\alpha_t^2 \sum_k \lambda_{t, k}^a(J_{C_k}(\pi_f) - J_{C_k}(\theta_{t})) \\
&\quad - \sum_{t\in\mathcal{N}}\alpha_t^2 \sum_i \nu_{t, i}^b(J_{R_i}(\pi_f) - J_{R_i}(\theta_{t})) \\
&\leq D_\mathrm{KL}(\pi_f||\pi_0) + \sum_t \alpha_t^2\underbrace{\left(\kappa_1 + \frac{2R_\mathrm{max} \lambda_\mathrm{max}}{1-\gamma} \max(N, M) \right)}_{=: \kappa_2}.
\end{aligned}
\end{equation*}
Since $\sum_{t\notin \mathcal{N}}\alpha_t \eta = \sum_t \alpha_t \eta - \sum_{t\in \mathcal{N}}\alpha_t \eta$,
\begin{equation*}
\begin{aligned}
&\sum_{t\in\mathcal{N}} \left( \alpha_t \left(\sum_i \nu_{t, i}^a(J_{R_i}(\pi_f) - J_{R_i}(\theta_{t})) - \eta \right) \right) + \sum_{t} \alpha_t\eta \leq D_\mathrm{KL}(\pi_f||\pi_0) + \sum_t \alpha_t^2 \kappa_2.
\end{aligned}
\end{equation*}
Due to the Robbins-Monro condition, the right term converges to real number.
Since $\sum_t \alpha_t \eta = \infty$ in the left term, the following must be satisfied:
\begin{equation*}
\sum_{t\in\mathcal{N}} \left( \alpha_t \left(\sum_i \nu_{t, i}^a(J_{R_i}(\pi_f) - J_{R_i}(\theta_{t})) - \eta \right) \right) = -\infty,
\end{equation*}
which results in $\sum_{t\in\mathcal{N}} \alpha_t = \infty$.
Now, let us define a preference $\omega$, where $\omega_i:= \bar{\nu}^a_i$.
By replacing $\mu$ in (\ref{eq: summation of all time step}) with $\pi_\omega^*$,
\begin{equation*}
\begin{aligned}
&\sum_{t\in\mathcal{N}} \left( \alpha_t \sum_i \nu_{t, i}^a(J_{R_i}(\pi_\omega^*) - J_{R_i}(\theta_{t})) \right) - \sum_{t\notin\mathcal{N}} \left( \alpha_t \sum_k \lambda_{t, k}^b(J_{C_k}(\pi_\omega^*) - J_{C_k}(\theta_{t})) \right) \\
&\leq D_\mathrm{KL}(\pi_\omega^*||\pi_0) + \kappa_2 \sum_t \alpha_t^2.
\end{aligned}
\end{equation*}
\begin{equation*}
\begin{aligned}
&\Rightarrow \sum_{t\in\mathcal{N}} \left( \alpha_t \sum_i \nu_{t, i}^a(J_{R_i}(\pi_\omega^*) - J_{R_i}(\theta_{t})) \right) \leq D_\mathrm{KL}(\pi_\omega^*||\pi_0) + \kappa_2 \sum_t \alpha_t^2.
\end{aligned}
\end{equation*}
Since the right term converges to a real number and $\sum_{t\in \mathcal{N}}\alpha_t = \infty$, the following must be satisfied:
\begin{equation*}
\lim_{t\to \infty}\sum_i \nu_{t, i}^a(J_{R_i}(\pi_\omega^*) - J_{R_i}(\theta_{\mathcal{N}_t})) = 0,
\end{equation*}
where $\mathcal{N}_t$ is the $t$th element of $\mathcal{N}$.
Consequently,
\begin{equation*}
\sum_i \bar{\nu}_i^a J_{R_i}(\pi_\omega^*) = \sum_i \bar{\nu}_i^a \lim_{t\to \infty}J_{R_i}(\pi_{\mathcal{N}_t}),
\end{equation*}
which means that $\lim_{t\to \infty} \pi_{\mathcal{N}_t}$ is also a LS optimal policy for the preference $\omega$.
Due to Lemma \ref{lemma: LS is CP}, the policy converges to a CP optimal policy.
\end{proof}

\subsection{Proof of Theorem \ref{thm: convergence of modified CoMOGA}}
\label{sec: proof of CoMOGA}

\CoMOGA*
\begin{proof}
This theorem can be proved by identifying sequences $\nu_{t,i}^a, \nu_{t,i}^b, \lambda_{t,k}^a, \lambda_{t,k}^b$ that satisfy all conditions mentioned in Theorem \ref{thm: generalized policy update rule}.
By examining (\ref{eq: modified CoMOGA}), $\nu_{t,i}^a$, $\nu_{t,i}^b$, $\lambda_{t,k}^a$, $\lambda_{t,k}^b$, and $\alpha_t$ can be deduced as follows:
\begin{equation*}
\begin{aligned}
&\nu_{t,i}^a = \frac{\nu^*_i}{\sum_{j=1}^N \nu^*_j}, 
\nu_{t,i}^b = 0, 
\lambda_{t,k}^b = \frac{\lambda^*_k}{\sum_{j=1}^M \lambda^*_j}, \\
&\lambda_{t,k}^a = \min(\max(||\bar{g}_\omega^\mathrm{ag}||, g_\mathrm{min}), g_\mathrm{max}) \cdot \min \left(\frac{\lambda_k^*}{\epsilon_t \sum_{i=1}^N\nu_i^*}, \lambda_\mathrm{max} \right), \\
&\alpha_t = \frac{\epsilon_t}{\min(\max(||\bar{g}_\omega^\mathrm{ag}||, g_\mathrm{min}), g_\mathrm{max})}.
\end{aligned}
\end{equation*}
Substituting the deduced sequences into (\ref{eq: modified CoMOGA}), we can obtain the same formulation as (\ref{eq: generalized policy update rule}).
The deduced sequences satisfy that $\sum \nu_{t,i}^a = 1$, $\sum \lambda_{t,k}^b = 1$, $\nu_{t,i}^b \in \mathbb{R}_{\geq 0}$ is bounded, $0 \leq \lambda_{t,k}^a \leq g_\mathrm{max} \lambda_\mathrm{max}$ is also bounded, and $\alpha_t \in [\epsilon_t/g_\mathrm{max}, \epsilon_t/g_\mathrm{min}]$ follows the Robbins-Monro condition.
Additionally, since $\lambda_k^*$ is an optimal point of the dual problem, it satisfies the KKT condition, which results in $\lambda_k^*=0$ if $J_{C_k}(\theta_t) < d_k$.
Consequently, $\lambda_{t,k}^b (J_{C_k}(\theta_t) - d_k) \geq 0$.
Now, we need to check whether the $\nu_{t,i}^a$ converges to a real number.
Given that the learning rate follows the Robbins-Monro condition, the policy converges to a specific policy. 
Since (\ref{eq: gradient aggregation}) becomes invariant at this point of convergence, the solution of the dual problem is also fixed.
Consequently, $\nu_{t,i}^a$ converge to a specific value.
\end{proof}

\newpage
\section{Experimental Details}
\label{sec: experiment detail}

In this section, we describe the details of each task, including Safety Gymnasium \citep{ji2023safetygym}, legged robot locomotion \citep{kim2023sdac}, and Multi-Objective Gymnasium \citep{felten2023mogym}. We then provide a detailed explanation of the performance metrics named hypervolume and sparsity. 
Finally, we present implementation details, such as network architectures and hyperparameter settings.

\subsection{Task Details}

\subsubsection{Safety Gymnasium}

\begin{figure}[h]
\centering
\begin{subfigure}[b]{0.23\textwidth}
    \centering
    \includegraphics[width=1.0\textwidth]{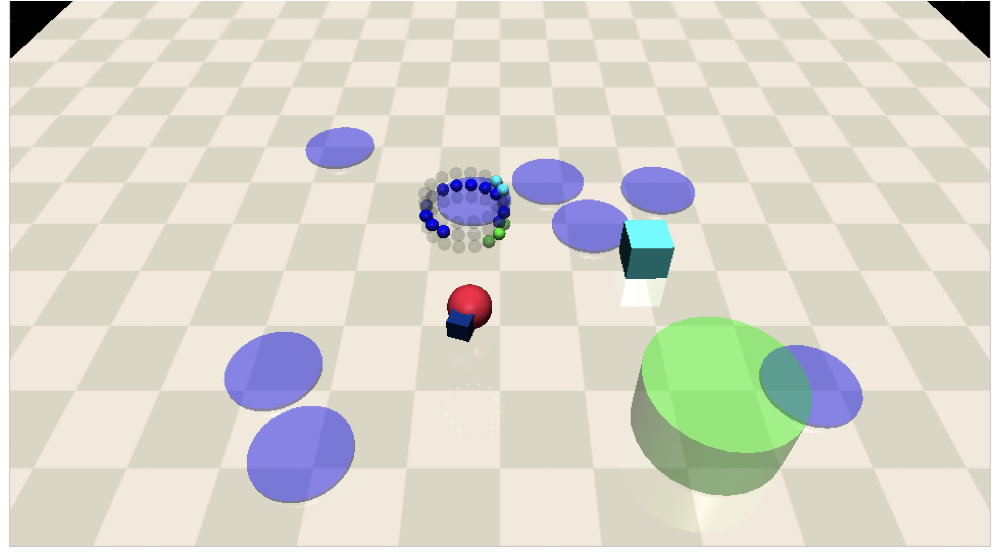}
    \caption{Single-Point-Goal}
\end{subfigure}
\hfill
\begin{subfigure}[b]{0.23\textwidth}
    \centering
    \includegraphics[width=1.0\textwidth]{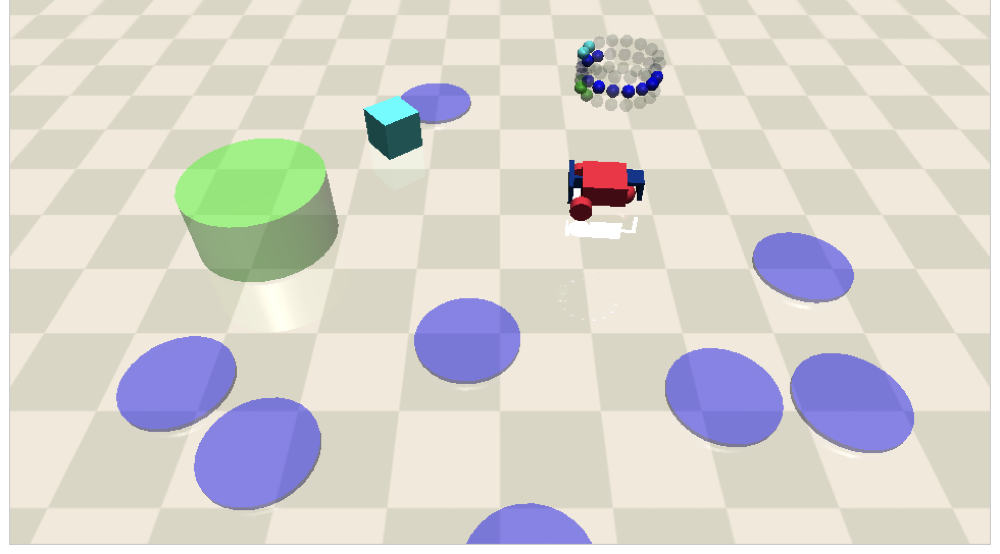}
    \caption{Single-Car-Goal}
\end{subfigure}
\hfill
\begin{subfigure}[b]{0.23\textwidth}
    \centering
    \includegraphics[width=1.0\textwidth]{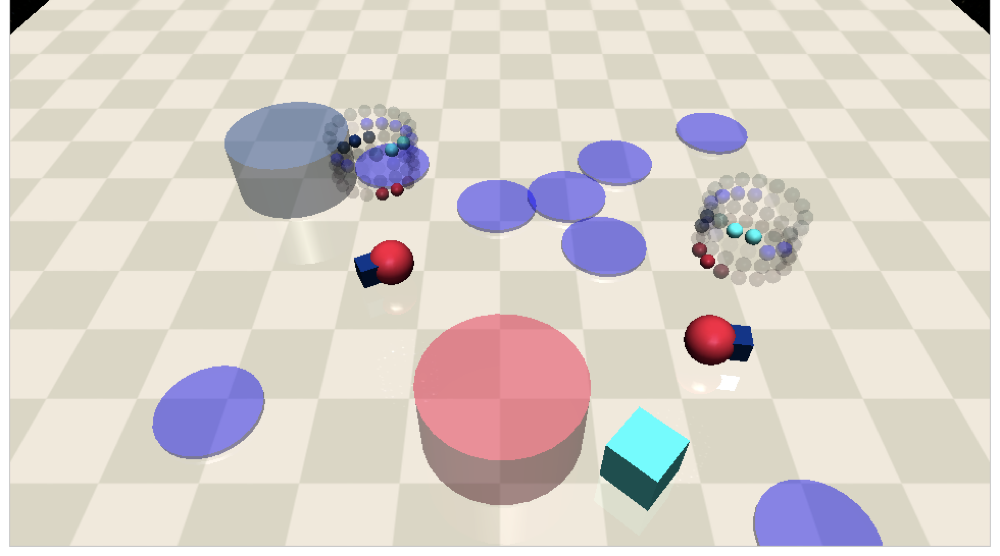}
    \caption{Multi-Point-Goal}
\end{subfigure}
\hfill
\begin{subfigure}[b]{0.23\textwidth}
    \centering
    \includegraphics[width=1.0\textwidth]{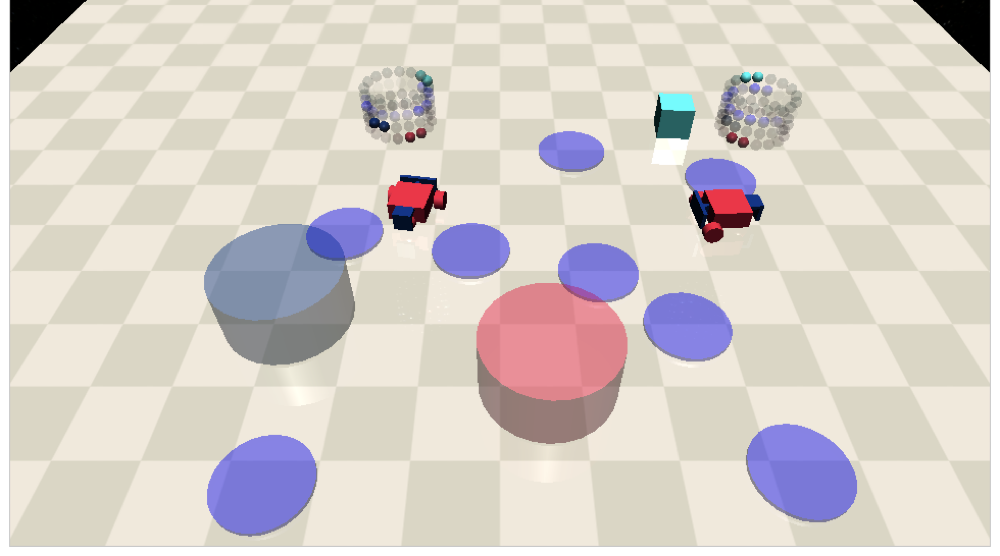}
    \caption{Multi-Car-Goal}
\end{subfigure}
\caption{\textbf{Snapshots of the Safety-Gymnasium tasks.}
There are two types of robots: point and car.
These tasks aim to control the robots to reach goals while avoiding hazardous areas, colored purple.
In the Single-Point-Goal and Single-Car-Goal tasks, a single goal, shown in green, is randomly spawned. 
Conversely, the Multi-Point-Goal and Multi-Car-Goal tasks have two goals, shown in blue and red, and if an agent reaches any of them, a reward is given to the agent. 
}
\label{fig: snapshot of safety gym}
\end{figure}

In the Safety Gymnasium \citep{ji2023safetygym}, we utilize the single-agent safe navigation tasks: \texttt{SafetyPointGoal1-v0} and \texttt{SafetyCarGoal1-v0}, and the multi-agent safe navigation tasks: \texttt{SafetyPointMultiGoal1-v0} and \texttt{SafetyCarMultiGoal1-v0}.
Snapshots of each task are shown in Figure \ref{fig: snapshot of safety gym}. 

In the single-agent tasks, the observation space includes velocity, acceleration, and Lidar of the goal and hazards, and the dimensions are 60 and 72 for the point and car robots, respectively.
The action space for both robots is two-dimensional. 
There are eight hazard areas and one goal, and hazard areas are randomly spawned at the beginning of each episode, while goals are randomly placed whenever the agent reaches the current goal.
These tasks originally had a single objective: maximizing the number of goals reached. 
However, in order to have multiple objectives, we have modified them to include a second objective: minimizing energy consumption. 
Therefore, the reward is two-dimensional, and we use the original definition for the first reward (please refer to \citet{ji2023safetygym}).
The second reward is defined as follows:
\begin{equation}
R_2(s, a, s') := -\frac{1}{|A|}\sum_{i=1}^{|A|} (a_i/10)^2.
\end{equation}
Also, we use the original definition of the cost function, which gives one if the agent enters hazardous areas and zero otherwise.

The multi-agent tasks have two agents and two goals, and we need to control both agents to reach goals as much as possible while avoiding hazardous areas.
The original implementation uses a dictionary type for observations and actions, so we have modified them to be an array type.
Additionally, in the original setup, each goal is pre-assigned to a specific agent, reducing the difficulty of solving multi-agent tasks.
Therefore, we have modified the implementation so that each goal is not pre-assigned, allowing each agent to compete to achieve goals.
The observation space contains the same information from the single-agent tasks for the two agents, as well as additional information: Lidar of the first and second goals.
The dimensions of observation space are then 152 and 176 for the point and car robots, respectively.
The action space is four-dimensional, which is doubled from the single-agent tasks to control both agents.
These tasks have two objectives: maximizing the number of goals reached by each agent. 
Also, there are two constraints: avoiding hazardous areas for each agent.
Since there are no additional objectives or constraints, the original definitions of the reward and cost functions are used without modification.

\subsubsection{Legged Robot Locomotion}

\begin{figure}[ht]
\centering
\hspace{0.02\textwidth}%
\begin{subfigure}[b]{0.3\textwidth}
    \centering
    \includegraphics[width=1.0\textwidth]{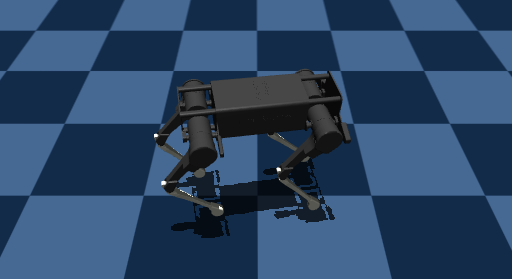}
    \caption{Quadruped}
\end{subfigure}
\hspace{0.15\textwidth}%
\begin{subfigure}[b]{0.3\textwidth}
    \centering
    \includegraphics[width=1.0\textwidth]{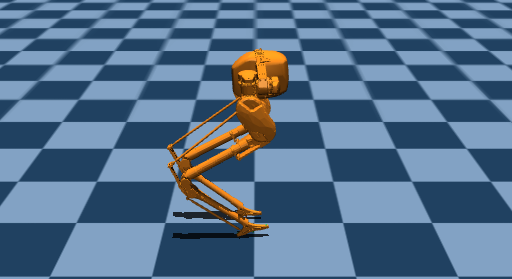}
    \caption{Biped}
\end{subfigure}
\hspace{0.05\textwidth}%
\caption{\textbf{Snapshots of the legged robot locomotion tasks.}
Robots aim to follow a given command while ensuring they do not fall over.}
\label{fig: snapshot of legged robot}
\vspace{-10pt}
\end{figure}

The legged robot locomotion tasks \citep{kim2023sdac} aim to control bipedal or quadrupedal robots so that their velocity matches a randomly sampled command. 
This command specifies the target linear velocities in the x and y-axis directions 
and the target angular velocity in the z-axis direction, denoted by ($v_x^\mathrm{cmd}, v_y^\mathrm{cmd}, \omega_z^\mathrm{cmd}$).
Snapshots of these tasks are shown in Figure \ref{fig: snapshot of legged robot}.
The quadrupedal robot has 12 joints and 12 motors, and the bipedal robot has 14 joints and 10 motors.
Each robot is operated by a PD controller that follows the target position of the motors, and the target position is given as an action.
Hence, the number of motors corresponds to the dimension of action space.
In order to provide enough information for stable control, observations include the command, linear and angular velocities of the robot base, and the position and velocity of each joint. 
As a result, the dimensions of the observation space are 160 for the quadruped and 132 for the biped.
Originally, these tasks have a single objective of following a given command and three constraints: 1) maintaining body balance, 2) keeping the height of CoM (center of mass) above a certain level, and 3) adhering to pre-defined foot contact timing.
Therefore, we use the original implementation for the observation, action, and cost functions but modify the reward function to have multiple objectives.
The modified version has two objectives: 1) reducing the difference between the current velocity and the command and 2) minimizing energy consumption.
Then, the reward function is defined as follows:
\begin{equation}
\begin{aligned}
&R_1(s, a, s') := 1 - (v_x^\mathrm{base} - v_x^\mathrm{cmd})^2 - (v_y^\mathrm{base} - v_y^\mathrm{cmd})^2 - (\omega_z^\mathrm{base} - \omega_z^\mathrm{cmd})^2, \\
&R_2(s, a, s') := 1 - \frac{1}{J}\sum_{j=1}^J \left(\frac{\tau_j}{M_\mathrm{robot}}\right)^2,
\end{aligned}
\end{equation}
where $J$ is the number of joints, $\tau_j$ is the torque applied to the $j$th joint, and $M_\mathrm{robot}$ is the mass of the robot.
However, in the simulation of the bipedal robot, obtaining valid torque information is difficult due to the presence of closed loops in the joint configuration.
To address this issue, we modify the second reward for the bipedal robot to penalize the action instead of the joint torque as follows:
\begin{equation}
R_2(s,a,s') := 1 - \frac{1}{|A|} \sum_{i=1}^{|A|} a_i^2.
\end{equation}

\subsubsection{Multi-Objective Gymnasium}

\begin{table}[h]
    \caption{Specifications of MO-Gymnasium tasks.}
    \label{tab: description of mo gym}
    \centering
    \begin{tabular}{c|ccc}
        \toprule
         &  Observation Space&  Action Space&  \texttt{\#} of Objectives (Entries)\\
        \midrule
         Half-Cheetah&  $\subseteq\mathbb{R}^{17}$&  $\subseteq\mathbb{R}^{6}$& 2 (velocity, energy)\\
         Hopper&  $\subseteq\mathbb{R}^{11}$&  $\subseteq\mathbb{R}^{3}$& 3 (velocity, height, energy)\\
         Humanoid&  $\subseteq\mathbb{R}^{376}$&  $\subseteq\mathbb{R}^{17}$& 2 (velocity, energy)\\
         Ant&  $\subseteq\mathbb{R}^{27}$&  $\subseteq\mathbb{R}^{8}$& 3 ($x$-velocity, $y$-velocity, energy)\\
         Walker2d&  $\subseteq\mathbb{R}^{17}$&  $\subseteq\mathbb{R}^{6}$& 2 (velocity, energy)\\
         Swimmer&  $\subseteq\mathbb{R}^{8}$&  $\subseteq\mathbb{R}^{2}$& 2 (velocity, energy)\\
         \bottomrule
    \end{tabular}
\end{table}
We utilize the Multi-Objective (MO) Gymnasium \citep{felten2023mogym} to evaluate the proposed method in MORL tasks which have no constraints.
Among several tasks, we use the MuJoCo tasks with continuous action space, and there are six tasks available: \texttt{mo-hopper-v4}, \texttt{mo-humanoid-v4}, \texttt{mo-halfcheetah-v4}, \texttt{mo-ant-v4}, \texttt{mo-walker2d-v4}, and \texttt{mo-swimmer-v4}.
Description of the observation space, action space, and the number of objectives for these tasks are summarized in Table \ref{tab: description of mo gym}, and for more details, please refer to \citet{felten2023mogym}.

\subsection{Metric Details}
\label{sec: performance metric}

\begin{figure}[ht]
\centering
\begin{subfigure}[b]{0.3\textwidth}
    \centering
    \includegraphics[width=1.0\textwidth]{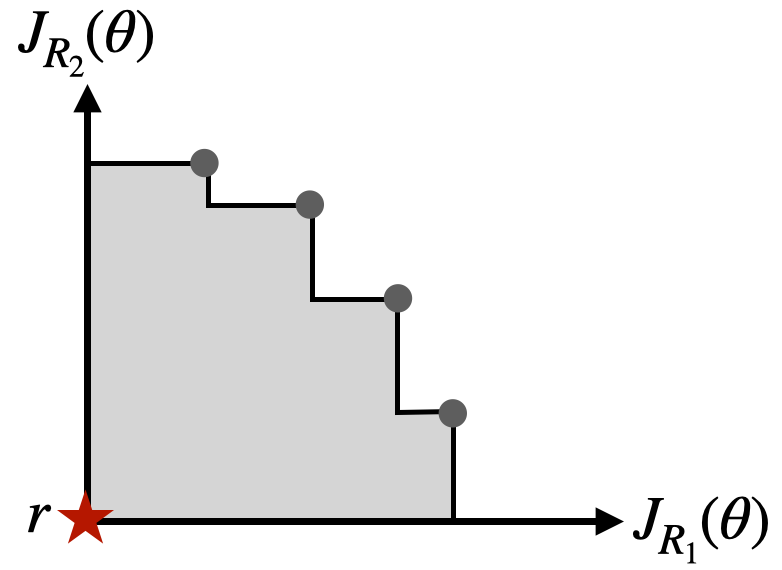}
    \caption{Hypervolume}
\end{subfigure}
\hfill
\begin{subfigure}[b]{0.3\textwidth}
    \centering
    \includegraphics[width=1.0\textwidth]{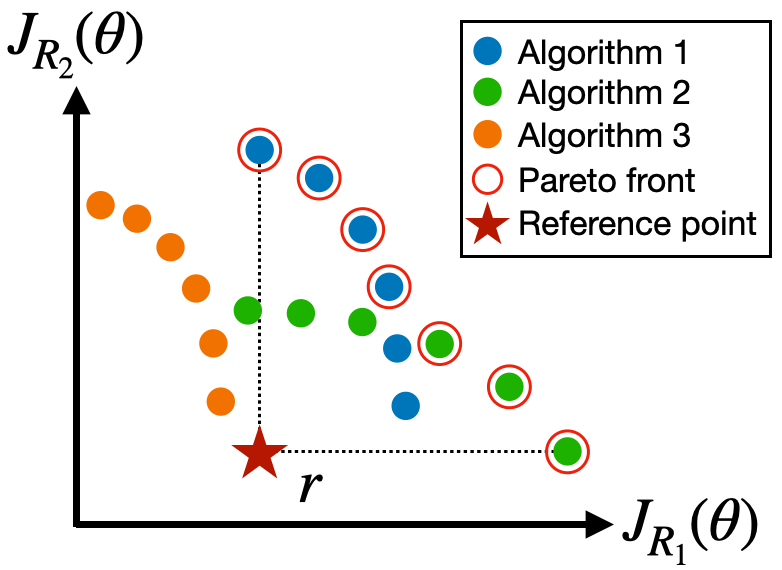}
    \caption{Reference Point}
\end{subfigure}
\hfill
\begin{subfigure}[b]{0.3\textwidth}
    \centering
    \includegraphics[width=1.0\textwidth]{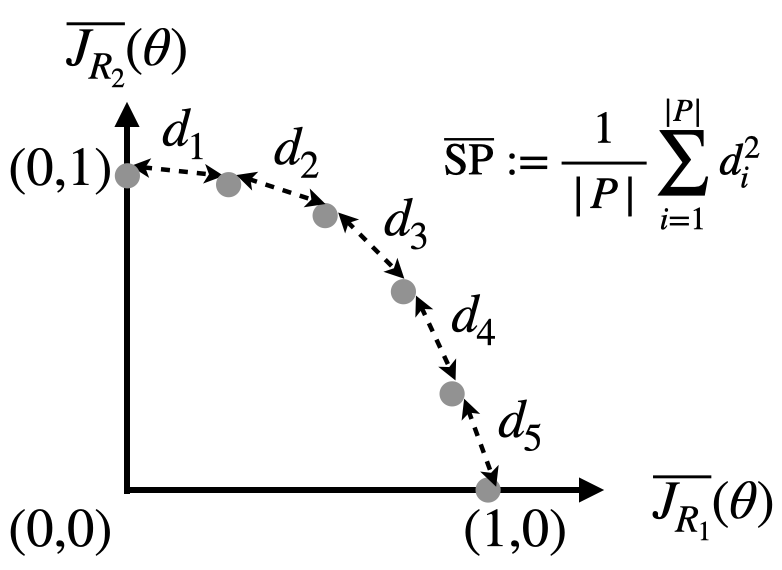}
    \caption{Normalized Sparsity}
\end{subfigure}
\caption{\textbf{Details of performance metrics.}
The hypervolume is represented by the gray area in (a). 
In (b), the red star indicates the reference point, determined from the entire Pareto front, whose elements are circled in red. 
In (c), the normalized sparsity is calculated by averaging the squared distances between elements in the normalized space.
}
\label{fig: performance metric}
\vspace{-10pt}
\end{figure}

In this section, we explain in detail the performance metrics used in the main text: hypervolume (HV) and normalized sparsity ($\overline{\mathrm{SP}}$).
First, we need to prepare an estimated CP front to calculate such metrics.
To do this, we pre-sample a fixed number of equally spaced preferences and roll out the trained universal policy for each pre-sampled preference to estimate objective and constraint values.
If the constraint values do not exceed the thresholds, we store the objective values, $(J_{R_1}(\theta), ..., J_{R_N}(\theta))$, in a set.
Once this process is completed for all preferences, we can construct an estimated CP front by extracting elements from the collected set that are not dominated by any others.
Now, we introduce details on each metric.

HV is the volume of the area surrounding a given reference point, $r$, and an estimated CP front, $P$, as defined in (\ref{eq: hypervolume}) and is visualized in Figure \ref{fig: performance metric}(a).
However, as seen in Figure \ref{fig: performance metric}(a) or (\ref{eq: hypervolume}), the metric value varies depending on what reference point is used.
Therefore, we do not set the reference point arbitrarily but use a method of determining the reference point from the results of all algorithms, as shown in Figure \ref{fig: performance metric}(b).
For a detailed explanation, suppose that we have a set of estimated CP fronts, $\{P_i\}_{i=1}^K$, where $P_i$ is the estimated front obtained from the $i$th algorithm.
Then, we obtain the union of $\{P_i\}_{i=1}^K$, denoted by $P'$, and extract elements that are not dominated by any others in $P'$ to find the Pareto front of the entire set, $\mathrm{PF}(P')$, whose elements are circled in red in Figure \ref{fig: performance metric}(b).
Finally, we get the reference point whose $i$th value is $\mathrm{min}(\{p_i| \forall p \in \mathrm{PF}(P')\})$.
Through this process, the reference point can be automatically set after obtaining results from all algorithms for each task.
In addition, the HV value can be zero for some algorithms whose elements are entirely dominated by $\mathrm{PF}(P')$.
An example of this is Algorithm 3 in Figure \ref{fig: performance metric}(b).

As mentioned in the main text, sparsity (SP) \citep{xu2020pgmorl} has a correlation with HV is defined as follows:
\begin{equation}
\label{eq: sparsity}
\mathrm{SP}(P) := \frac{1}{|P| - 1}\sum_{j=1}^{N}\sum_{i=1}^{|P| - 1} (\tilde{P}_j[i] - \tilde{P}_j[i+1])^2,
\end{equation}
where $\tilde{P}_j := \mathrm{Sort}(\{p[j]|\forall p \in P\})$.
An increase in HV implies that the elements of the CP front are moving further away from each other, which increases SP.
To remove this correlation, we provide a normalized version as defined in (\ref{eq: normalized sparsity}), and the calculation process is illustrated in Figure \ref{fig: performance metric}(c).
Given an estimated CP front, $P$, we normalize each estimated objective value so that the minimum value corresponds to zero and the maximum to one.
The normalized objectives are denoted as $\overline{J_{R_i}}(\theta)$.
We then calculate the average of the squares of the distances between elements in this normalized space. 
This normalized version still produces a large value when a CP front is densely clustered in particular areas while being sparsely distributed overall. 
Therefore, it maintains the ability to measure sparsity while removing the correlation with HV.

\subsection{Implementation Details}
\label{sec: implementation details}

In this section, several implementation details including hyperparameter settings and network structures are provided.
Before that, we report information on the computational resources.
In all experiments, we used a PC equipped with an Intel Xeon CPU E5-2680 and an NVIDIA TITAN Xp GPU.
The average training time in the single point goal task is presented in Table \ref{tab: computation time}.

\begin{table}[h]
    \caption{Training time on the point single goal task.}
    \label{tab: computation time}
    \centering
    \begin{tabular}{c|cccc}
        \toprule
           &  CAPQL&  PD-MORL & LP3 & CoMOGA (ours) \\
        \midrule
        Time & 16h 49m 29s& 16h 40m 9s& 24h 4m 34s& 17h 27m 58s \\
        \bottomrule
    \end{tabular}
\end{table}

\subsubsection{MORL to CMORL using the Lagrangian method}
\label{sec: ls}

In order to apply linear scalarization-based MORL methods to CMORL, it is required to solve the following constrained optimization problem given a preference $\omega$:
\begin{equation}
\mathrm{max}_\theta \sum_{i=1}^N \omega_i J_{R_i}(\theta) \; \mathrm{s.t.} \; J_{C_k}(\theta) \leq d_k \; \forall k \in \{1, .., M\}.
\end{equation}
As explained in the main text, the constraints of the above problem can be handled by converting it into a Lagrange dual problem, and the dual problem is written as follows:
\begin{equation}
\mathrm{max}_{\lambda \geq 0} \mathrm{min}_\theta -\sum_{i=1}^N \omega_i J_{R_i}(\theta) + \sum_{k=1}^M \lambda_k \cdot (J_{C_k}(\theta) - d_k),
\end{equation}
where $\lambda_k$ are Lagrange multipliers.
The multiplies should be learned separately for each preference, but preferences have continuous values. 
Therefore, we instead parameterize the multiplier using neural networks as $\lambda_\phi(\omega)$, where $\phi$ is a parameter.
Then, the above problem can be solved by concurrently updating the policy parameter and the Lagrange multipliers using the following loss functions:
\begin{equation}
\label{eq: Lagrangian method}
\begin{aligned}
\mathrm{min}_\theta L(\theta) &:= -\sum_{i=1}^N \omega_i J_{R_i}(\theta) + \sum_{k=1}^M \lambda_k J_{C_k}(\theta), \\
\mathrm{min}_{\phi} L(\phi) &:= -\sum_{k=1}^M \lambda_{k, \phi}(\omega)(J_{C_k}(\theta) - d_k) \; \mathrm{s.t.} \; \lambda_{k, \phi}(\omega) \geq 0,
\end{aligned}
\end{equation}
where the multipliers can be forced to non-negative values using some activation functions, such as softplus and exponential functions.
The policy loss in (\ref{eq: Lagrangian method}) is constructed by adding $\sum \lambda_k J_{C_k}(\theta)$ to the loss from the unconstrained MORL problem. 
Therefore, the implementation of the existing MORL algorithms can be easily extended to a CMORL algorithm by adding $\sum \lambda_k J_{C_k}(\theta)$ to the original policy loss.

\subsubsection{Network Architecture}

The proposed method and baselines are based on the actor-critic framework, requiring policy and critic networks.
Additionally, since the baselines use the Lagrangian method to handle the constraints, they use multiplier networks, as mentioned in Appendix \ref{sec: ls}.
We have implemented these networks as fully connected networks (FCNs), and their structures are presented in Table \ref{table: network structure}.
\begin{table}[ht]
\caption{Network structures.}
\label{table: network structure}
\centering
\resizebox{0.5\textwidth}{!}{
\begin{tabular}{c|c|c}
\toprule
 & Parameter& Value \\
\midrule
 Policy network&Hidden layer& (512, 512)\\
 &Activation& LeakyReLU\\
 & Normalization&\ding{56}\\
\midrule
 Critic network&Hidden layer& (512, 512)\\
 &Activation& LeakyReLU\\
 & Normalization&\ding{56}\\
\midrule
 Quantile distributional& Hidden layer&(512, 512)\\
critic network & Activation&LeakyReLU\\
 & Normalization&LayerNorm\\
 & \texttt{\#} of quantiles&25\\
\midrule
 Multiplier network& Hidden layer&(512,)\\
& Activation&LeakyReLU\\
 & Normalization&\ding{56}\\
\bottomrule
\end{tabular}
}
\end{table}
In the unconstrained tasks, MO-Gymnasium, we use standard critic networks that output scalar values for given observations and preferences.
However, the standard critic networks usually estimate the objective or constraint functions with large biases, making it challenging to satisfy the constraints.
Therefore, we use quantile distributional critics \citep{dabney2018quantile} to lower the estimation biases in the Safety-Gymnasium and legged robot locomotion experiments.
The quantile distribution critic outputs several quantile values of the discounted sum of the reward or cost functions for given observations and preferences.
Then, the objective or constraint functions can be estimated by averaging the outputted quantiles.
For the proposed method and LP3 that have similar frameworks to TRPO \citep{schulman2015trpo}, TD($\lambda$) method \citep{kim2023sdac} is used to train the distributional critic networks.
For PD-MORL and CAPQL, the distributional critic networks are trained by reducing the Wasserstein distance between the current quantiles and the truncated one-step TD targets, as in TQC \citep{kuznetsov2020tqc}.
The policy network, similar to other RL algorithms dealing with continuous action spaces \citep{schulman2015trpo, haarnoja2018sac, fujimoto2018td3}, outputs the mean and diagonal variance of a normal distribution.
However, in the locomotion tasks, the policy networks are modified to output only the mean value by fixing the diagonal variance in order to lower the training difficulty.

\subsubsection{Hyperparameter Settings}
\label{sec: hyperparameter settings}

We report the hyperparameter settings for the CMORL tasks (Safety-Gymnasium, Locomotion) in Table \ref{tab: hyperparameter for CMORL} and the settings for the MORL tasks (MO-Gymnasium) in Table \ref{tab: hyperparameter for MORL}.
\begin{table}[bht]
    \caption{Hyperparameters for CMORL tasks.}
    \label{tab: hyperparameter for CMORL}
    \centering
    \resizebox{0.95\textwidth}{!}{
        \begin{tabular}{c|cccc}
        \toprule
                & CAPQL& PD-MORL& LP3& \textbf{CoMOGA (Ours)}\\
        \midrule
        Discount factor& 0.99& 0.99& 0.99&0.99\\
        Length of replay buffer& 1000000& 1000000& 100000&100000\\
        Steps per update& 10& 10& 1000&1000\\
        Batch size& 256& 256& 10000&10000\\
 Optimizer& Adam& Adam& Adam&Adam\\
 Policy learning rate& $3 \times 10^{-4}$& $3 \times 10^{-4}$& -&$3 \times 10^{-4}$\\
 Critic learning rate& $3 \times 10^{-4}$& $3 \times 10^{-4}$& $3 \times 10^{-4}$&$3 \times 10^{-4}$\\
 Multiplier learning rate& $1 \times 10^{-5}$& $1 \times 10^{-5}$& $3 \times 10^{-4}$&-\\
 Soft update ratio& 0.005& 0.005& -&-\\
 \texttt{\#} of quantiles to truncate& 2& 2& -&-\\
 \texttt{\#} of HER samples& -& 3& -&-\\
 Angle loss coefficient& -& 10& -&-\\
 Explore and target action noise scale& -& (0.1, 0.2)& -&-\\
 \texttt{\#} of action samples& -& -& 20&-\\
 TD($\lambda$) factor& -& -& 0.97&0.97\\
 \texttt{\#} of target quantiles& -& -& 50&50\\
 \texttt{\#} of preference samples& -& -& 10&10\\
 Local region size& -& -& 0.05&0.05\\
 $g_\mathrm{min}$, $g_\mathrm{max}$& -& -& -&$(0, \infty)$\\
 $\lambda_\mathrm{max}$& -& -& -&$\infty$\\
        $H$ matrix& -& -& Fisher information&Fisher information\\
        \bottomrule
        \end{tabular}
    }
\end{table}

\newpage
\begin{table}[bht]
    \caption{Hyperparameters for MORL tasks.}
    \label{tab: hyperparameter for MORL}
    \centering
    \resizebox{0.95\textwidth}{!}{
        \begin{tabular}{c|cccc}
        \toprule
                & CAPQL& PD-MORL& LP3& \textbf{CoMOGA (Ours)}\\
        \midrule
        Discount factor& 0.99& 0.99& 0.99&0.99\\
        Length of replay buffer& 1000000& 1000000& 100000&1000000\\
        Steps per update& 1& 1& 1000&10\\
        Batch size& 256& 256& 10000&256\\
 Optimizer& Adam& Adam& Adam&Adam\\
 Policy learning rate& $3 \times 10^{-4}$& $3 \times 10^{-4}$& -&$3 \times 10^{-4}$\\
 Critic learning rate& $3 \times 10^{-4}$& $3 \times 10^{-4}$& $3 \times 10^{-4}$&$3 \times 10^{-4}$\\
 Soft update ratio& 0.005& 0.005& -&0.005\\
 \texttt{\#} of HER samples& -& 3& -&-\\
 Angle loss coefficient& -& 10& -&-\\
 Explore and target action noise scale& -& (0.1, 0.2)& -&-\\
 \texttt{\#} of action samples& -& -& 20&-\\
 TD($\lambda$) factor& -& -& 0.97&-\\
 \texttt{\#} of preference samples& -& -& 10&10\\
 Local region size& -& -& 0.14&0.05\\
 $g_\mathrm{min}$, $g_\mathrm{max}$& -& -& -&$(0, \infty)$\\
 $\lambda_\mathrm{max}$& -& -& -&$\infty$\\
        $H$ matrix& -& -& Fisher information&Identity matrix\\
        \bottomrule
        \end{tabular}
    }
\end{table}

\section{Additional Experimental Results}
\label{sec: additional results}

In this section, estimated CP fronts for tasks with two objectives are visualized.
As mentioned in the appendix \ref{sec: performance metric}, the estimated CP front can be obtained by rolling out a trained universal policy with different preferences, and in this section, 20 equally spaced preferences are used.
In Figure \ref{fig: Legged pareto front}, the CP fronts for the legged robot locomotion tasks are visualized, Figure \ref{fig: safety gym pareto front} presents the CP fronts for the Safety Gymnasium tasks, and Figure \ref{fig: MOGym pareto front} presents the CP fronts for the MO-Gymnasium tasks.
The hopper and ant tasks in the MO-Gymnasium are excluded from this visualization as they have three objectives.

\begin{figure}[!ht]
\centering
\begin{subfigure}[b]{0.45\textwidth}
    \centering
    \includegraphics[width=1.0\textwidth]{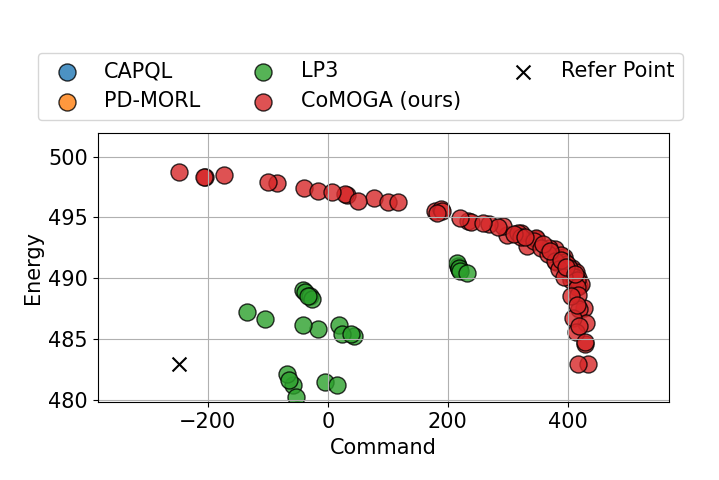}
    \caption{Biped}
\end{subfigure}
\hfill
\begin{subfigure}[b]{0.45\textwidth}
    \centering
    \includegraphics[width=1.0\textwidth]{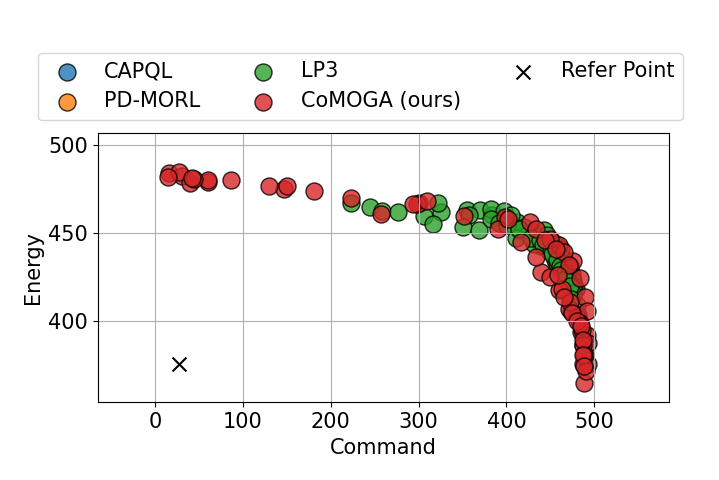}
    \caption{Quadruped}
\end{subfigure}
\caption{Visualization of estimated CP fronts for legged robot locomotion tasks. For each task, we plot the estimated fronts obtained from five random seeds in the same figure.}
\label{fig: Legged pareto front}
\end{figure}

\newpage
\begin{figure}[!ht]
\centering
\begin{subfigure}[b]{0.45\textwidth}
    \centering
    \includegraphics[width=1.0\textwidth]{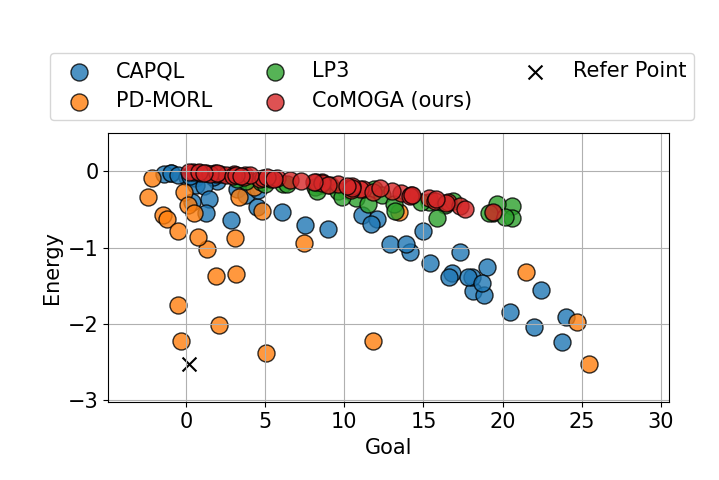}
    \caption{Single-Point-Goal}
\end{subfigure}
\hfill
\begin{subfigure}[b]{0.45\textwidth}
    \centering
    \includegraphics[width=1.0\textwidth]{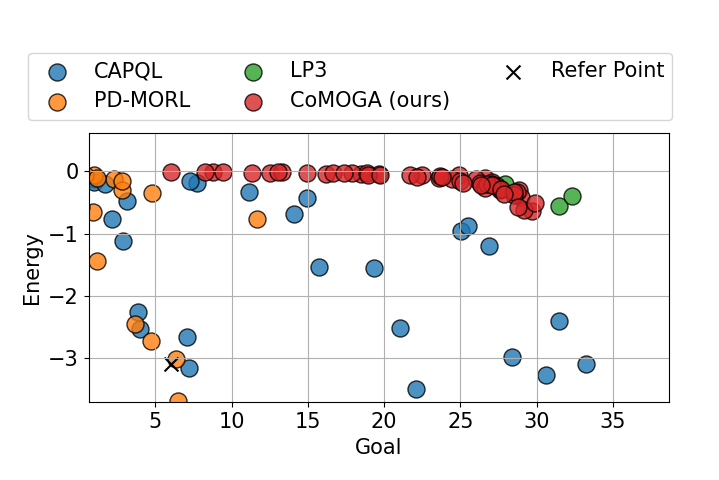}
    \caption{Single-Car-Goal}
\end{subfigure}
\hfill
\begin{subfigure}[b]{0.45\textwidth}
    \centering
    \includegraphics[width=1.0\textwidth]{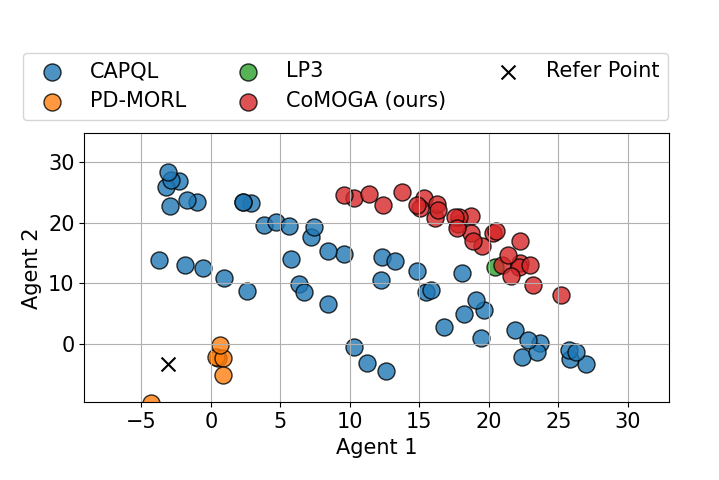}
    \caption{Multi-Point-Goal}
\end{subfigure}
\hfill
\begin{subfigure}[b]{0.45\textwidth}
    \centering
    \includegraphics[width=1.0\textwidth]{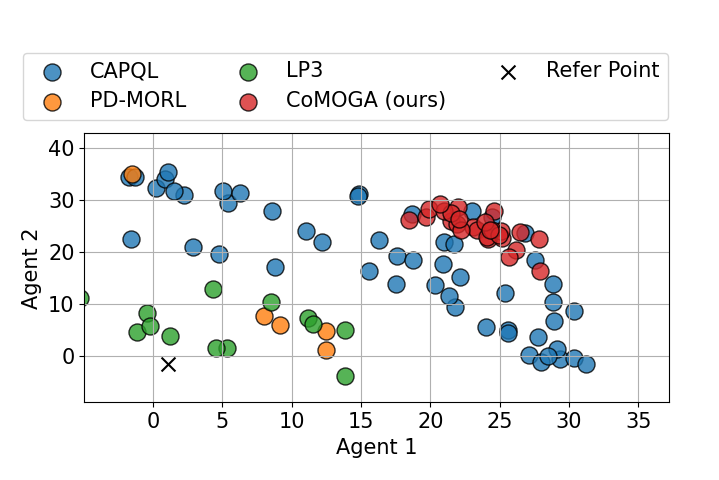}
    \caption{Multi-Car-Goal}
\end{subfigure}
\caption{Visualization of estimated CP fronts for Safety Gymnasium tasks.}
\label{fig: safety gym pareto front}
\end{figure}

\begin{figure}[!ht]
\centering
\begin{subfigure}[b]{0.45\textwidth}
    \centering
    \includegraphics[width=1.0\textwidth]{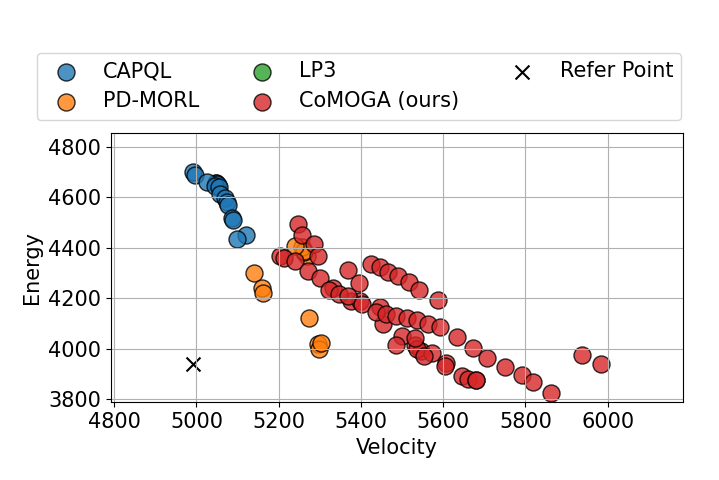}
    \caption{Humanoid}
\end{subfigure}
\hfill
\begin{subfigure}[b]{0.45\textwidth}
    \centering
    \includegraphics[width=1.0\textwidth]{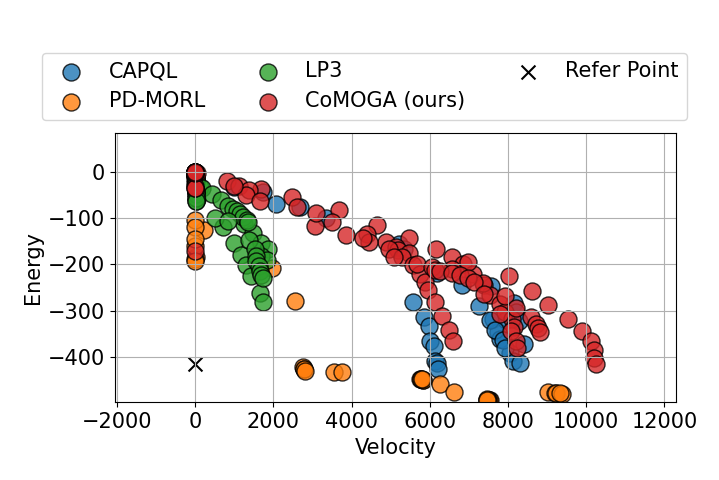}
    \caption{Half-Cheetah}
\end{subfigure}
\hfill
\begin{subfigure}[b]{0.45\textwidth}
    \centering
    \includegraphics[width=1.0\textwidth]{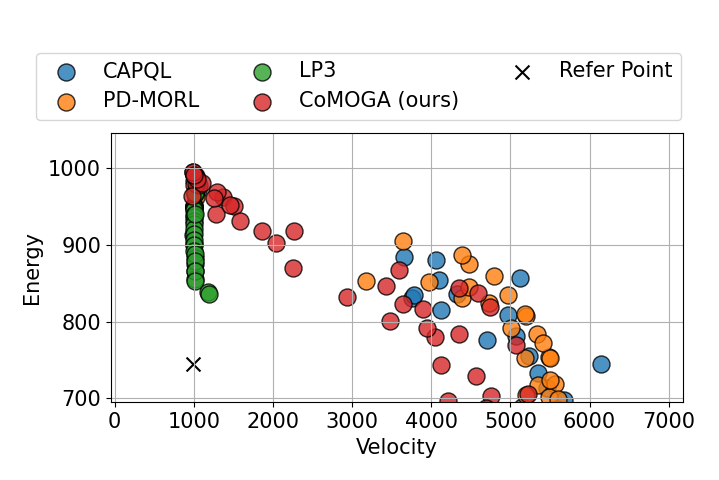}
    \caption{Walker2d}
\end{subfigure}
\hfill
\begin{subfigure}[b]{0.45\textwidth}
    \centering
    \includegraphics[width=1.0\textwidth]{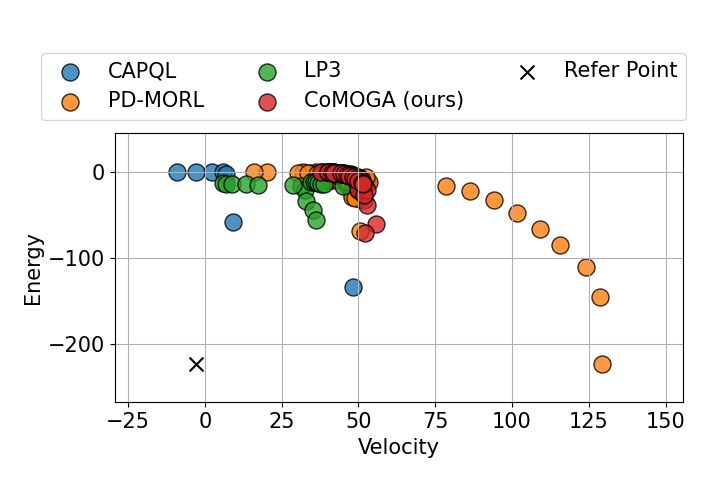}
    \caption{Swimmer}
\end{subfigure}
\caption{Visualization of estimated CP fronts for MO Gymnasium tasks.}
\label{fig: MOGym pareto front}
\end{figure}

\end{document}